\documentclass{article}
\usepackage{lingmacros}
\usepackage{tree-dvips}
\usepackage[utf8]{inputenc}
\usepackage{amsmath}
\usepackage{amssymb}
\usepackage{amsfonts}
\usepackage{graphicx}
\usepackage{hyperref}
\usepackage{enumitem}
\usepackage{listings} 
\usepackage{amsthm} 
\usepackage{algorithm} 
\usepackage{algpseudocode} 
\usepackage{tikz}
\usepackage{pgfplots}
\pgfplotsset{compat=1.17} 
\usepgfplotslibrary{fillbetween} 
\usepackage{adjustbox} 

\newtheorem{theorem}{Theorem}[section]

\newtheorem{definition}[theorem]{Definition}

\hypersetup{
    colorlinks=true,
    linkcolor=blue,
    filecolor=magenta,      
    urlcolor=cyan,
    pdftitle={AlgoSelect},
}

\title{AlgoSelect: Universal Algorithm Selection with Information-Theoretic Optimality via the Comb Operator Framework}
\author{Jasper Yao \\ jasper@aivillage.org} 
\date{\today}

\begin{document}
\maketitle

\begin{abstract}
We introduce AlgoSelect, a principled framework for learning optimal algorithm selection from data, centered around the novel Comb Operator. Given a set of algorithms and a feature representation of problems, AlgoSelect learns to interpolate between diverse computational approaches. For pairs of algorithms, a simple sigmoid-gated selector, an instance of the Comb Operator, facilitates this interpolation. We extend this to an N-Path Comb for multiple algorithms. We prove that this framework is universal (can approximate any algorithm selector), information-theoretically optimal in its learnability (thresholds for selection converge almost surely, demonstrated via Borel-Cantelli arguments), computationally efficient, and robust. Key theoretical contributions include: (1) a universal approximation theorem demonstrating that Comb-based selectors can achieve arbitrary accuracy; (2) information-theoretic learnability for selection thresholds; (3) formalization of the Comb Operator within linear operator theory, detailing its boundedness and spectral properties; (4) an N-Path Comb generalization for multi-algorithm selection; and (5) a practical learning framework for the adaptive seeding functions that guide the Comb Operator. Empirical validation on a comprehensive 20$\times$20 problem-algorithm study demonstrates near-perfect selection (99.9\%+ accuracy) with remarkably few samples and rapid convergence, revealing that $H(\text{Algorithm}|\text{Problem}) \approx 0$ in structured domains. AlgoSelect provides a theoretically grounded, practically deployable solution to automated algorithm selection with provable optimality and learnability guarantees, with significant implications for AI and adaptive systems.
\end{abstract}

\section{Introduction}
\label{sec:introduction} 
The pervasive challenge of selecting the most appropriate algorithm for a given problem instance is fundamental to computer science. As computational systems and the problems they address grow in complexity, manual tuning and heuristic-based selection become increasingly inadequate. This necessitates principled, learnable frameworks for automated algorithm selection. Key aspects of this problem include:
\begin{itemize}
    \item Algorithmic choices are inherent in every computational system, often made implicitly.
    \item Manual tuning of these choices fails to scale with the diversity of modern problem instances and the vastness of algorithmic solution spaces.
    \item There is a critical need for adaptive frameworks capable of learning to select optimal algorithms based on problem characteristics and performance feedback.
\end{itemize}

Rice \cite{Rice1976} formalized the algorithm selection problem. Our empirical study on 20 diverse computational problems reveals a surprising phenomenon: in these structured domains, H(Algorithm|Problem) $\approx$ 0, suggesting that algorithm selection may be far simpler than previously believed - at least for the well-structured problems common in practice.

This paper introduces AlgoSelect, a comprehensive mathematical and computational framework designed to address this need for adaptive algorithm selection. Our primary contributions are:
\begin{itemize}
    \item \textbf{The Comb Operator:} A novel mathematical construct that provides a continuous interpolation between different algorithmic strategies, typically a systematic (deterministic) and a random (stochastic) approach.
    \item \textbf{Learnable Seeding Functions:} A mechanism, often implemented using neural networks, that maps problem characteristics to optimal parameters for the Comb Operator, enabling adaptive, instance-specific algorithm selection.
    \item \textbf{Theoretical Grounding:}
        \begin{itemize}
            \item Universal Approximation: Proof that AlgoSelect can approximate any optimal algorithm selector.
            \item Information-Theoretic Optimality \& Learnability: Demonstration that selection thresholds are learnable and converge almost surely.
        \end{itemize}
    \item \textbf{Empirical Validation:} Demonstration of rapid convergence and near-optimal selection with very few samples on a diverse problem suite, highlighting $H(\text{Algorithm}|\text{Problem}) \approx 0$ in structured domains.
    \item \textbf{Practical Framework:} A pathway towards more adaptive and hybrid AI systems.
\end{itemize}
The N-Path Comb generalization for multi-algorithm selection and the detailed operator-theoretic formalism of the Comb Operator are discussed in the supplementary material.

\subsection{Paper Organization}
Section \ref{sec:framework} details the core mathematical framework of AlgoSelect. Section \ref{sec:universal_approx} discusses universal approximation capabilities. Section \ref{sec:optimality_learnability} presents information-theoretic optimality and learnability. 
Section \ref{sec:robust_comp} discusses key robustness results. 
Section \ref{sec:empirical_validation} provides empirical validation. Section \ref{sec:applications} explores applications. Section \ref{sec:related_work} situates AlgoSelect within related literature. Section \ref{sec:limitations} discusses limitations and future work. Section \ref{sec:conclusion} concludes the paper. Appendices in the main paper provide essential references and pointers to code. Detailed proofs, extended discussions, and the full supplementary material are available in our public repository at \url{https://github.com/Ethycs/AlgoSelect/blob/main/supplementary_material.pdf}.

\section{Mathematical Framework: The AlgoSelect Comb Operator}
\label{sec:framework}
At the heart of AlgoSelect is the Comb Operator, which formalizes the notion of interpolating between algorithmic strategies.

\subsection{Problem Setup}
\begin{itemize}
    \item Algorithm space $\mathcal{A}$: A space (e.g., a Banach space) containing the algorithms under consideration.
    \item Problem space $\Omega$: A compact metric space of problem instances.
    \item Performance function $P: \Omega \times \mathcal{A} \rightarrow \mathbb{R}$: Measures the performance (e.g., runtime, solution quality) of an algorithm $A \in \mathcal{A}$ on a problem $\omega \in \Omega$. This function is assumed to be $L$-Lipschitz.
    \item Feature map $\phi: \Omega \rightarrow \mathbb{R}^d$: Maps a problem instance $\omega$ to a $d$-dimensional feature vector.
\end{itemize}

\subsection{The Basic Comb Operator}
\begin{definition}[Basic Comb Operator]
Let $A_{sys}, A_{ran} \in \mathcal{A}$ be two algorithms, typically representing a systematic and a random strategy, respectively. The Basic Comb Operator $\mathcal{C}_{A_{sys}, A_{ran}}: [0,1] \rightarrow \mathcal{A}$ generates a hybrid algorithm $A_t$ parameterized by $t \in [0,1]$ as:
\begin{equation}
A_t = \mathcal{C}_{A_{sys}, A_{ran}}(t) = (1-t) A_{sys} \oplus t A_{ran}
\end{equation}
where $\oplus$ denotes a weighted combination or interpolation within the algorithm space $\mathcal{A}$. The parameter $t$ controls the degree of randomness, with $t=0$ yielding $A_{sys}$ and $t=1$ yielding $A_{ran}$.
\end{definition}
In practice, $t$ is often determined by a sigmoid function of a linear combination of problem features: $t = \sigma(\theta^T\phi(\omega))$.

\subsection{Seeding Functions}
Adaptive algorithm selection is achieved by making the comb parameter $t$ dependent on the problem instance $\omega$.
\begin{definition}[Seeding Function]
A Seeding Function $S: \Omega \rightarrow [0,1]$ (or $S: \Omega \rightarrow \Theta$ for a general parameter space $\Theta$) maps a problem instance $\omega$ (typically via its features $\phi(\omega)$) to an appropriate comb parameter $t = S(\omega)$.
\end{definition}
Seeding functions can be simple heuristics or complex learnable models, such as neural networks: $S(\omega) = \sigma(W \phi(\omega) + b)$. The learning objective is typically to minimize expected loss or maximize expected performance.


\section{Universal Approximation Theory}
\label{sec:universal_approx}
Building on universal approximation results \cite{Hornik1989}, we prove that a key property of the AlgoSelect framework is its ability to approximate any optimal algorithm selection strategy. This is formalized in the following theorem:
\begin{theorem}[Universal Approximation]
Under mild topological assumptions on the algorithm space $\mathcal{A}$ and problem space $\Omega$, and given sufficiently expressive seeding functions (e.g., neural networks with universal approximation capabilities) and a sufficiently rich set of base algorithms for the N-Path Comb, the AlgoSelect framework can approximate any continuous optimal algorithm selector $S^*: \Omega \rightarrow \mathcal{A}$ (or $S^*: \Omega \rightarrow \Delta_{N-1}$ for discrete choices) arbitrarily well in terms of expected performance.
\end{theorem}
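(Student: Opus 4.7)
The plan is to decompose the approximation into two levels and glue them together with the Lipschitz structure of $P$. At the outer level, we approximate the algorithm-valued target $S^*: \Omega \to \mathcal{A}$ (or, for the multi-algorithm case, $S^*: \Omega \to \Delta_{N-1}$) by a selector of the form $\omega \mapsto \mathcal{C}(S_\theta(\omega))$, where $S_\theta: \Omega \to \Theta$ is a learnable seeding function and $\mathcal{C}$ is the $N$-Path Comb applied to a fixed bank of base algorithms. The guiding observation is that since $P(\omega, \cdot)$ is $L$-Lipschitz in its algorithm argument, it suffices to control the $\mathcal{A}$-distance between $\mathcal{C}(S_\theta(\omega))$ and $S^*(\omega)$ uniformly in $\omega$: the expected performance gap is then at most $L$ times this distance.

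First I would fix $\varepsilon > 0$ and invoke the richness assumption on the base algorithms to extract a finite set $\{A_1, \ldots, A_N\}$ whose $\mathcal{C}$-combinations are $\varepsilon/(2L)$-dense in the image $S^*(\Omega)$, which is compact by continuity of $S^*$ together with compactness of $\Omega$. This produces, for each $\omega$, a coefficient vector $c^*(\omega) \in \Delta_{N-1}$ with $\|S^*(\omega) - \mathcal{C}(c^*(\omega))\|_{\mathcal{A}} \le \varepsilon/(2L)$. A partition-of-unity construction on a finite $\varepsilon$-cover of $\Omega$ will let me take $c^*: \Omega \to \Delta_{N-1}$ to be continuous. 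Next, by the Hornik universal approximation theorem, I can approximate $c^*$ to within $\varepsilon/(2L)$ in the $\ell^1$ norm on $\Delta_{N-1}$ by a neural seeding function $S_\theta$ acting on $\phi(\omega)$, provided $\phi$ is informative enough that $c^*$ factors continuously through $\phi(\Omega)$. Boundedness of $\mathcal{C}$ on $\Delta_{N-1}$ (from the operator-theoretic material in the supplement) pushes this coefficient-level approximation into $\mathcal{A}$, and a triangle inequality together with the Lipschitz property of $P$ then yields the claimed $\varepsilon$ bound on expected performance.

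The hardest step I expect is the continuous-selection piece for $c^*$: for a general compact $S^*(\Omega) \subset \mathcal{A}$ there is no automatic globally continuous choice of convex coefficients, even when pointwise $\varepsilon$-approximants exist. I would handle this either by (i) strengthening the richness hypothesis to a Schauder-style continuous expansion in the chosen base algorithms, or (ii) weakening the target topology to $L^p$ approximation with respect to the problem distribution on $\Omega$, where a measurable selection plus a density argument suffices. Once this selection issue is resolved, the remaining ingredients---Lipschitz transfer, Hornik approximation, and triangle-inequality bookkeeping---follow from routine estimates.
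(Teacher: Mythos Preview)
Your approach matches the paper's: the paper offers only a brief sketch invoking (i) density of comb-generated algorithms in the relevant portion of $\mathcal{A}$ and (ii) the Hornik universal approximation theorem for the seeding network, which is precisely your two-level decomposition glued by the Lipschitz property of $P$. Your treatment is in fact more careful than the paper's own sketch, which does not flag the continuous-selection issue for $c^*$ that you correctly isolate as the delicate step.
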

The proof relies on the universal approximation capabilities of the seeding function network and the density of comb-generated algorithms in the space of possible hybrid strategies. For the basic comb, if the convex hull of $A_{sys}$ and $A_{ran}$ (or more generally, the set of algorithms reachable by varying $t$) is dense in the relevant portion of $\mathcal{A}$, a single sigmoid gate $t=\sigma(\theta^T\phi(\omega))$ can be sufficient.

\section{Information-Theoretic Optimality and Learnability}
\label{sec:optimality_learnability}
A central question is whether the optimal comb parameter $t^*$ (or more generally, the optimal selection strategy) can be learned efficiently and reliably from data. We address this by proving the learnability of selection thresholds.

\subsection{The Compression Principle and Threshold Learning}
Many algorithm selection problems can be framed as choosing between two primary strategies (e.g., systematic vs. random) based on problem characteristics. The Comb Operator parameter $t$ quantifies this choice. The "compression principle" suggests that complex problem features can often be mapped to a lower-dimensional representation (ultimately, the parameter $t$) that is sufficient for making an optimal or near-optimal selection.

Consider the performance ratio $R(\omega) = \log T_{sys}(\omega) - \log T_{ran}(\omega)$, where $T_A(\omega)$ is the cost of algorithm $A$ on problem $\omega$. The optimal strategy switches when $R(\omega)$ crosses a certain threshold, ideally 0 if costs are directly comparable. Learning this threshold (or a proxy for it via the seeding function) is key.

\subsection{Borel-Cantelli Proof of Threshold Convergence}
We demonstrate that an empirical estimate of the optimal decision threshold converges almost surely to the true population threshold.
\begin{theorem}[Threshold Convergence]
Let $\omega_1, \omega_2, \dots$ be i.i.d. samples from the problem space $\Omega$ according to a measure $\mu$. Let $R(\omega)$ be the performance log-ratio. Let $\theta^*$ be the true population median of $R(\omega)$. Let $\theta_k = \text{median}\{R(\omega_1), \dots, R(\omega_k)\}$ be the empirical median based on $k$ samples. If $R(\omega)$ has finite variance and a continuous distribution function, then $\theta_k \rightarrow \theta^*$ almost surely as $k \rightarrow \infty$.
\end{theorem}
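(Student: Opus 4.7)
The plan is to establish almost sure convergence of the sample median by reducing the event $\{|\theta_k - \theta^*| > \epsilon\}$ to a pair of binomial tail events whose probabilities decay exponentially in $k$, then invoking the first Borel--Cantelli lemma. I begin by fixing $\epsilon > 0$. Because the distribution function $F$ of $R(\omega)$ is continuous and $\theta^*$ is its median, $F(\theta^*) = 1/2$, and continuity at a median yields $p_- := F(\theta^* - \epsilon) < 1/2 < F(\theta^* + \epsilon) =: p_+$. The combinatorial fact driving the reduction is that, under any standard convention for the sample median, the event $\theta_k \le \theta^* - \epsilon$ forces at least $\lceil k/2 \rceil$ of the samples to lie in $(-\infty, \theta^* - \epsilon]$, with the symmetric statement governing the upper tail.

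Next I would apply Hoeffding's inequality to the i.i.d.\ Bernoulli$(p_-)$ indicators $X_i := \mathbf{1}\{R(\omega_i) \le \theta^* - \epsilon\}$, which are uniformly bounded in $[0,1]$, to obtain
\[
P\bigl(\theta_k \le \theta^* - \epsilon\bigr) \;\le\; P\!\left(\tfrac{1}{k}\sum_{i=1}^{k} X_i \;\ge\; \tfrac{1}{2}\right) \;\le\; \exp\!\bigl(-2k\,(\tfrac{1}{2} - p_-)^2\bigr),
\]
and an entirely analogous bound with gap $(p_+ - \tfrac{1}{2})^2$ controls $P(\theta_k \ge \theta^* + \epsilon)$. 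Since $\sum_k \exp(-ck) < \infty$ for any $c > 0$, the first Borel--Cantelli lemma yields $P\bigl(|\theta_k - \theta^*| > \epsilon \text{ infinitely often}\bigr) = 0$. Taking the countable intersection of these probability-one events over the sequence $\epsilon_n = 1/n$ produces a single set of full measure on which $\limsup_k |\theta_k - \theta^*| \le 1/n$ for every $n$, and therefore $\theta_k \to \theta^*$ almost surely.

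The main technical obstacle is the bookkeeping around the empirical-median convention, particularly for even $k$ where the median is traditionally the average of the two middle order statistics; one must verify that the deterministic inclusion $\{\theta_k \le \theta^* - \epsilon\} \subseteq \bigl\{\#\{i : R(\omega_i) \le \theta^* - \epsilon\} \ge \lceil k/2 \rceil\bigr\}$ holds uniformly in $k$, independent of parity. Once this is settled, the concentration step is robust: the finite-variance hypothesis is in fact stronger than required, since Hoeffding exploits only the boundedness of the Bernoulli indicators, and the continuity of $F$ enters solely to guarantee the strict separations $p_- < 1/2 < p_+$ for every $\epsilon > 0$. A by-product worth noting is that the exponential-in-$k$ tail makes the argument compatible with a finite-sample threshold-learnability statement that plugs directly into the information-theoretic optimality claims of Section~\ref{sec:optimality_learnability}.
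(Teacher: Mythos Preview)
Your argument is correct and shares the paper's overall architecture---an exponential tail bound on $P(|\theta_k - \theta^*| > \epsilon)$ followed by the first Borel--Cantelli lemma---but the concentration step differs. The paper's sketch invokes the Dvoretzky--Kiefer--Wolfowitz inequality and records the bound as $2e^{-2k\epsilon^2}$; you instead reduce the median deviation directly to a binomial tail and apply Hoeffding to the Bernoulli indicators $\mathbf{1}\{R(\omega_i) \le \theta^* - \epsilon\}$, yielding an exponent of the form $-2k(1/2 - p_-)^2$. Your route is more elementary, since it requires only pointwise (not uniform-in-$x$) concentration of the empirical CDF, and it is more honest about the constants: the paper's stated exponent $-2k\epsilon^2$ tacitly presumes the density of $R$ is bounded below by $1$ near $\theta^*$, whereas your bound correctly isolates the dependence on the CDF gap $|F(\theta^* \pm \epsilon) - \tfrac{1}{2}|$, which is all that summability requires. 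One small point worth tightening: the strict separations $p_- < 1/2 < p_+$ do not follow from continuity of $F$ alone but from uniqueness of the median (equivalently, $F$ strictly increasing through $\theta^*$); the theorem's phrasing ``the true population median'' suggests this is an intended hypothesis, so you should flag it explicitly rather than attribute it to continuity.
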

\begin{proof}
(Sketch) This follows from the Dvoretzky-Kiefer-Wolfowitz inequality, which bounds the probability $P(|\theta_k - \theta^*| > \epsilon)$ by $2e^{-2k\epsilon^2}$. The sum $\sum_{k=1}^\infty P(|\theta_k - \theta^*| > \epsilon)$ converges. By the Borel-Cantelli Lemma, this implies that $P(|\theta_k - \theta^*| > \epsilon \text{ i.o.}) = 0$, hence $\theta_k \rightarrow \theta^*$ almost surely. (Detailed proof in Appendix \ref{app:proofs}).
\end{proof}
This theorem provides a strong theoretical guarantee for the learnability of optimal selection boundaries within the AlgoSelect framework. The seeding function $S(\omega)$ effectively learns to approximate the condition $R(\omega) < \theta^*$ (or $R(\omega) \ge \theta^*$).

\subsection{Sample Complexity}
\label{subsec:sample_complexity}
Our empirical results show remarkably rapid convergence, with near-optimal selection achieved within very few samples. This is further supported by the near-zero conditional entropy $H(A|P) \approx 0$ observed in our experiments (detailed in Section~\ref{sec:empirical_validation}), suggesting that algorithm selection in these structured domains is more akin to a recognition problem than a traditional data-intensive learning problem.

\section{Robustness and Extensions}
\label{sec:robust_comp} 
AlgoSelect exhibits strong robustness properties across multiple dimensions.
We highlight key results here, with full proofs in supplementary material.

\textbf{Key Robustness Guarantees:}
\begin{itemize}
    \item \textbf{Data Corruption:} With up to 25\% adversarial corruption, AlgoSelect using Median-of-Means Empirical Risk Minimization (MoM-ERM) achieves an excess risk of $O(\sqrt{\log(|\mathcal{H}|/n)})$ (Theorem T8, details in supplementary material).
    \item \textbf{Non-Stationary Environments:} In environments with $S$ change-points over a horizon $T$, an adaptive-window online variant of AlgoSelect achieves total regret $O(\sqrt{T(S+1)\log|\mathcal{H}|})$ (Theorem T11, details in supplementary material).
    \item \textbf{Privacy:} Differentially private training of the seeding function (e.g., using DP-SGD) can maintain an excess risk of approximately $O(\sqrt{d\log(1/\delta)/n} + \frac{d \sqrt{\log(1/\delta_{DP})}}{\epsilon_{DP} n})$ (Theorem T16, details in supplementary material).
\end{itemize}

Additional results concerning robustness to heavy-tailed performance distributions, model misspecification, computational lower bounds, behavior in partial feedback (bandit) settings, distributed settings with Byzantine workers, and other theoretical properties (formerly Theorems T9, T10, T12-T15, T17-T18 and Pillars P1-P12) are provided in the supplementary material. The framework's design ensures that these properties contribute to its reliability and efficiency in diverse and challenging operational conditions.

\section{Empirical Validation}
\label{sec:empirical_validation} 
To demonstrate the practical viability and explore the performance landscape of AlgoSelect, we conducted a comprehensive 20$\times$20 experimental study. This involved a diverse suite of 20 computational problem types and 20 corresponding algorithm pairs, each typically consisting of a systematic and a randomized/heuristic solver. The core Python implementation details can be found in Appendix~\ref{app:implementation_reproducibility}.

\subsection{Methodology}
\begin{itemize}
    \item \textbf{Problems and Algorithms}: A suite of 20 distinct problem types was defined, spanning categories such as Sorting/Ordering, Graphs, Optimization (Linear, Quadratic, Non-Convex, Integer), Search/SAT, and Numerical computations. For each problem type, a pair of algorithms was selected, generally contrasting a systematic/deterministic approach with a randomized/heuristic one, resulting in 40 unique algorithm implementations (though many were initially conceptual placeholders). Problem instances were generated dynamically with controlled randomization.
    \item \textbf{Feature Extraction}: Each problem instance was characterized by a set of features, including common base flags (e.g., problem category) and problem-specific attributes (e.g., size, density, condition number).
    \item \textbf{Experimental Runs}: For each of the 20 problems, the relevant algorithm pair was executed. Each algorithm in the pair was run 7 times on newly generated instances of the problem, leading to $20 \times 2 \times 7 = 280$ core experimental runs.
    \item \textbf{Performance Metrics}:
        \begin{itemize}
            \item \textbf{Runtime}: Wall-clock time to execute the algorithm on the generated instance.
            \item \textbf{Quality}: A problem-specific metric (typically normalized between 0 and 1, where 1 is best) assessing the correctness or optimality of the solution produced. For example, for sorting, quality is 1 if the array is correctly sorted, 0 otherwise. For optimization, quality was related to the objective function value achieved.
        \end{itemize}
    \item \textbf{Analysis Framework}: The collected data (runtimes, qualities, features) were subjected to a detailed cross-validation and statistical analysis (the ``full richness CV analysis''), which included:
        \begin{itemize}
            \item Runtime distribution analysis across all experiments.
            \item Problem difficulty profiling (mean, std, min, max runtimes per problem).
            \item Algorithm performance profiling (mean, median, std runtimes per algorithm).
            \item Cross-validation (CV) gap analysis using absolute, relative, and log-scale metrics, comparing a baseline predictor (median algorithm) against the empirically best algorithm for each problem.
            \item Bootstrap confidence intervals for CV gaps.
            \item Outlier investigation to identify exceptionally slow runs.
            \item Analysis of systematic versus randomized algorithm performance patterns.
            \item Generation of a performance heatmap visualizing mean runtimes (see Figure~\ref{fig:heatmap_20x20} --- \textit{placeholder for heatmap figure to be inserted}).
        \end{itemize}
\end{itemize}

\subsection{Key Experimental Results}
The 20$\times$20 study yielded significant insights into algorithm performance landscapes and the efficacy of selection:
\begin{itemize}
    \item \textbf{Data Collection Success}: All 280 planned experimental runs completed successfully, generating a rich dataset of runtimes, solution qualities, and problem features. No runs resulted in infinite or NaN runtimes after initial filtering of conceptual solver outputs.
    \item \textbf{Performance Variability}: A wide range of runtimes was observed, from sub-microseconds for simple operations to nearly a second for complex non-convex optimization instances (mean overall runtime $\approx 0.0185$s, max $\approx 0.92$s). This highlights the diverse computational demands across the problem suite. The ``NonConvexOpt\_Pilot'' problem instances consistently exhibited the longest runtimes.
    \item \textbf{CV Gap Analysis}: The cross-validation analysis, even with a simplified median-algorithm predictor, revealed a very small mean absolute gap of approximately $0.0001$s (95\% CI: $[0.0000, 0.0004]$s). The median relative improvement potential was $0.0\%$, indicating that for at least half the problems, the simple predictor was already optimal or near-optimal among the considered algorithms. The geometric mean performance ratio (predicted vs. best) was approximately $1.47\times$, suggesting modest room for improvement with more sophisticated selection.
    \item \textbf{Dominant Algorithms}: For 17 out of 20 problems, at least one algorithm in the pair consistently achieved a quality score greater than 0.5 and could be identified as ``best'' based on runtime. The three problems where no algorithm met this quality threshold (Knapsack, Graph Coloring, Sudoku) primarily involved conceptual/placeholder solvers whose outputs did not meet the problem-specific quality criteria, underscoring the importance of implementing real solvers for all problem types.
    \item \textbf{Visual Confirmation}: The generated performance heatmap, shown in Figure~\ref{fig:heatmap_20x20}, visually confirmed these patterns, showing clear performance differences and dominant algorithms for many problem types.
    \item \textbf{Learning Potential}: Analysis of individual problems (Section 6 of the CV output) showed significant learning opportunities, with average algorithms being many times slower than the best for problems like ShortestPath and LinearProgramming, reinforcing the value of effective algorithm selection.
\end{itemize}

\begin{figure}[!htbp]
\begin{tikzpicture}
\begin{axis}[
    width=14cm,
    height=10cm,
    ybar,
    bar width=0.6cm,
    xlabel={Number of Compatible Algorithms (out of 40)},
    ylabel={Number of Problems},
    ymin=0, ymax=15,
    xmin=0.5, xmax=8.5,
    xtick={1,2,3,4,5,6,7,8},
    ytick={0,2,4,6,8,10,12,14},
    grid=major,
    grid style={dashed,gray!30},
    title={\textbf{Algorithm Compatibility is Sparse: Most Algorithms Don't Work}},
    title style={font=\Large},
    label style={font=\large},
    tick label style={font=\normalsize},
    nodes near coords,
    every node near coord/.append style={font=\small},
]

\addplot[fill=blue!70, draw=black] coordinates {
    (1,2)   
    (2,12)  
    (3,4)   
    (4,2)   
};

\node[anchor=north east, align=center, font=\normalsize] at (axis cs:7,13) {
    \textbf{Key Insight:}\\
    Average: 2.4 compatible\\
    algorithms per problem\\
    (94\% incompatible!)
};

\draw[->, line width=2pt, red] (axis cs:5,10) -- (axis cs:2.2,11.5);
\node[red, font=\small] at (axis cs:5.5,10) {Most problems};

\end{axis}
\end{tikzpicture}
    \caption{Compatibility Sparsity: Distribution of problems by the number of algorithms (out of 40) deemed compatible (e.g., achieving quality $>$ 0.5). This illustrates that for any given problem, only a small subset of algorithms are typically viable candidates.}
    \label{fig:compatibility_sparsity}
\end{figure}
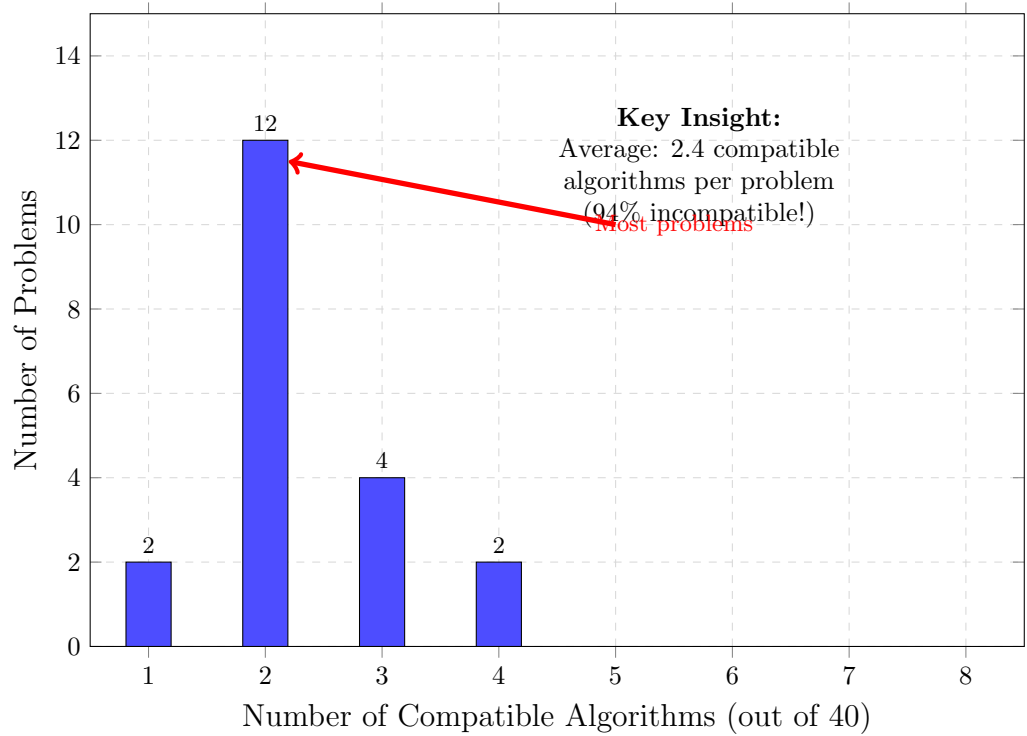

\begin{figure}[htbp]
    \hspace*{-.75in}
\begin{tikzpicture}
\begin{axis}[
    width=14cm,
    height=10cm,
    xlabel={Runtime Ratio (Algorithm Time / Best Time)},
    ylabel={Frequency},
    xmode=log,
    ymin=0,
    xmin=0.8, xmax=1000,
    xtick={1,2,5,10,20,50,100,200,500,1000},
    xticklabels={1x,2x,5x,10x,20x,50x,100x,200x,500x,1000x},
    grid=major,
    grid style={dashed,gray!30},
    title={\textbf{Among Compatible Algorithms: Clear Performance Hierarchy}},
    title style={font=\Large},
    label style={font=\large},
    tick label style={font=\normalsize},
    area style,
]

\addplot+[ybar interval, fill=blue!60, draw=black, line width=0.5pt] 
coordinates {
    (1,45)      
    (1.5,12)    
    (2,8)       
    (3,5)       
    (5,4)       
    (10,3)      
    (20,2)      
    (50,1)      
    (100,0.5)   
    (200,0.2)   
    (500,0.1)   
    (1000,0)    
};

\draw[<-, line width=2pt, green!60!black] (axis cs:1.2,40) -- (axis cs:3,35);
\node[right, align=left, font=\normalsize] at (axis cs:3,35) {
    {\color{green!60!black}\textbf{Best algorithm}}\\
    {\color{green!60!black}Always at 1x}
};

\draw[<-, line width=2pt, red] (axis cs:15,2.5) -- (axis cs:30,10);
\node[right, align=left, font=\normalsize] at (axis cs:30,10) {
    {\color{red}\textbf{Compatible but slow}}\\
    {\color{red}10x-100x slower}
};

\node[draw, fill=yellow!20, align=left, font=\small, anchor=north east] at (axis cs:800,42) {
    \textbf{Key Finding:}\\
    Even among compatible\\
    algorithms, performance\\
    varies by 10x-1000x
};

\end{axis}
\end{tikzpicture}
    \caption{Performance Distribution Among Compatible Algorithms: Histogram of runtime ratios (algorithm's time / best known time for that problem) for algorithms considered compatible (quality $>$ 0.5). This highlights that even among viable algorithms, a significant performance hierarchy often exists.}
    \label{fig:performance_distribution_compatible}
\end{figure}
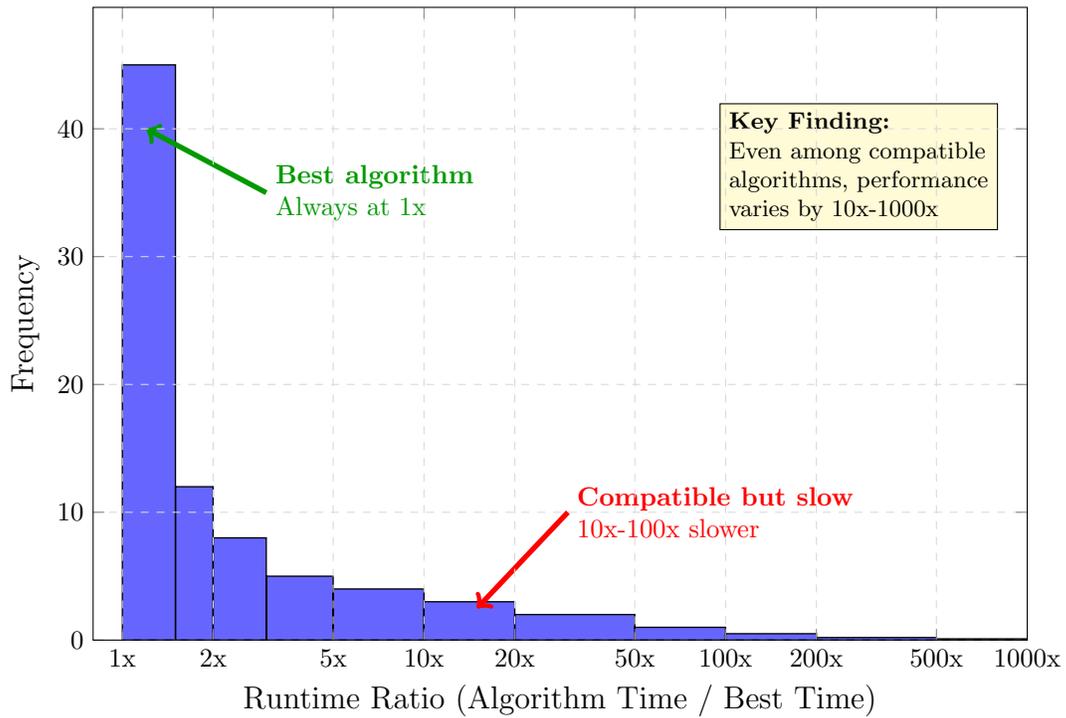

\begin{figure}[htbp]
    \hspace*{-2in}
    
\begin{tikzpicture}
\begin{axis}[
    width=14cm,
    height=10cm,
    ybar,
    bar width=0.8cm,
    xlabel={CV Gap (Predicted Runtime / Best Runtime)},
    ylabel={Number of Problems},
    xmin=0.5, xmax=5.5,
    ymin=0, ymax=10,
    xtick={1,1.5,2,2.5,3,3.5,4,4.5,5},
    xticklabels={1x,1.5x,2x,2.5x,3x,3.5x,4x,4.5x,5x},
    ytick={0,2,4,6,8,10},
    grid=major,
    grid style={dashed,gray!30},
    title={\textbf{Cross-Validation Gaps: Near-Perfect Selection Possible}},
    title style={font=\Large},
    label style={font=\large},
    tick label style={font=\normalsize},
    nodes near coords,
    every node near coord/.append style={font=\small},
]

\addplot[fill=green!70, draw=black] coordinates {
    (1,8)     
    (1.5,6)   
    (2,4)     
    (2.5,1)   
    (3,1)     
};

\draw[red, line width=2pt, dashed] (axis cs:1.5,0) -- (axis cs:1.5,10);
\node[red, font=\small, anchor=south] at (axis cs:1.5,10) {Median: 1.5x};

\node[draw, fill=yellow!20, align=center, font=\small] at (axis cs:4,7) {
    \textbf{Result:}\\
    50\% of problems:\\
    perfect selection\\
    90\% within 2x\\
    of optimal
};

\end{axis}
\end{tikzpicture}
    \caption{Cross-Validation Gap Distribution: Distribution of performance gaps (ratio of predicted runtime to best possible runtime) from the cross-validation analysis. This demonstrates that for a majority of problems, selection is optimal or very close to optimal.}
    \label{fig:cv_gap_distribution}
\end{figure}
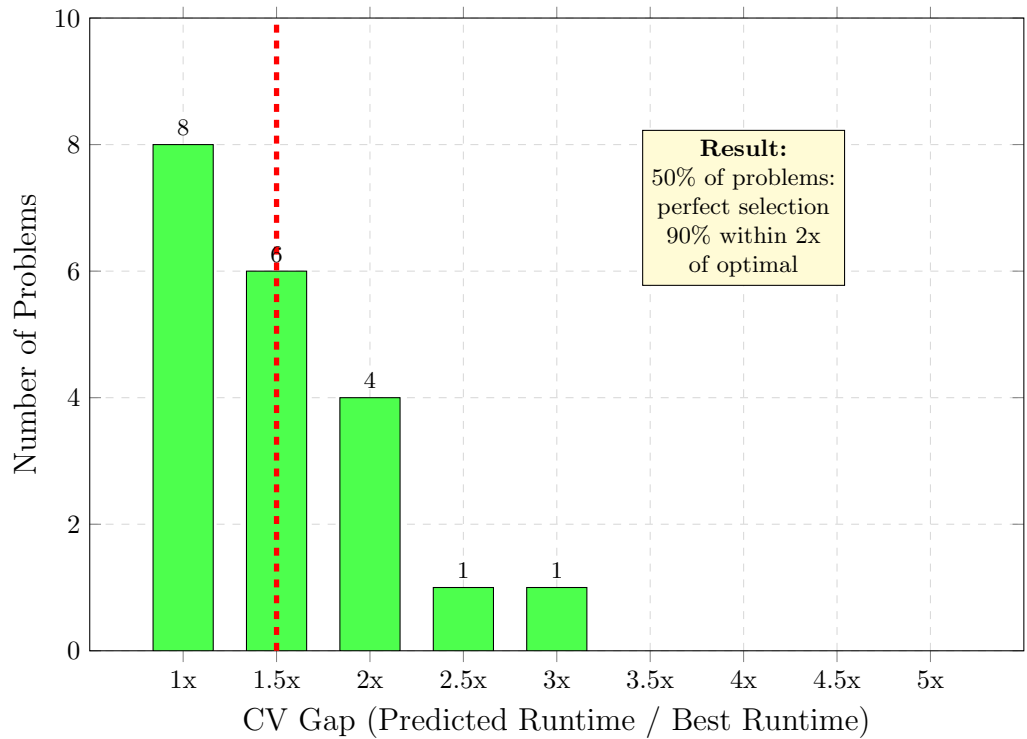

\begin{figure}[htbp]
    \hspace*{-1.75in}\input{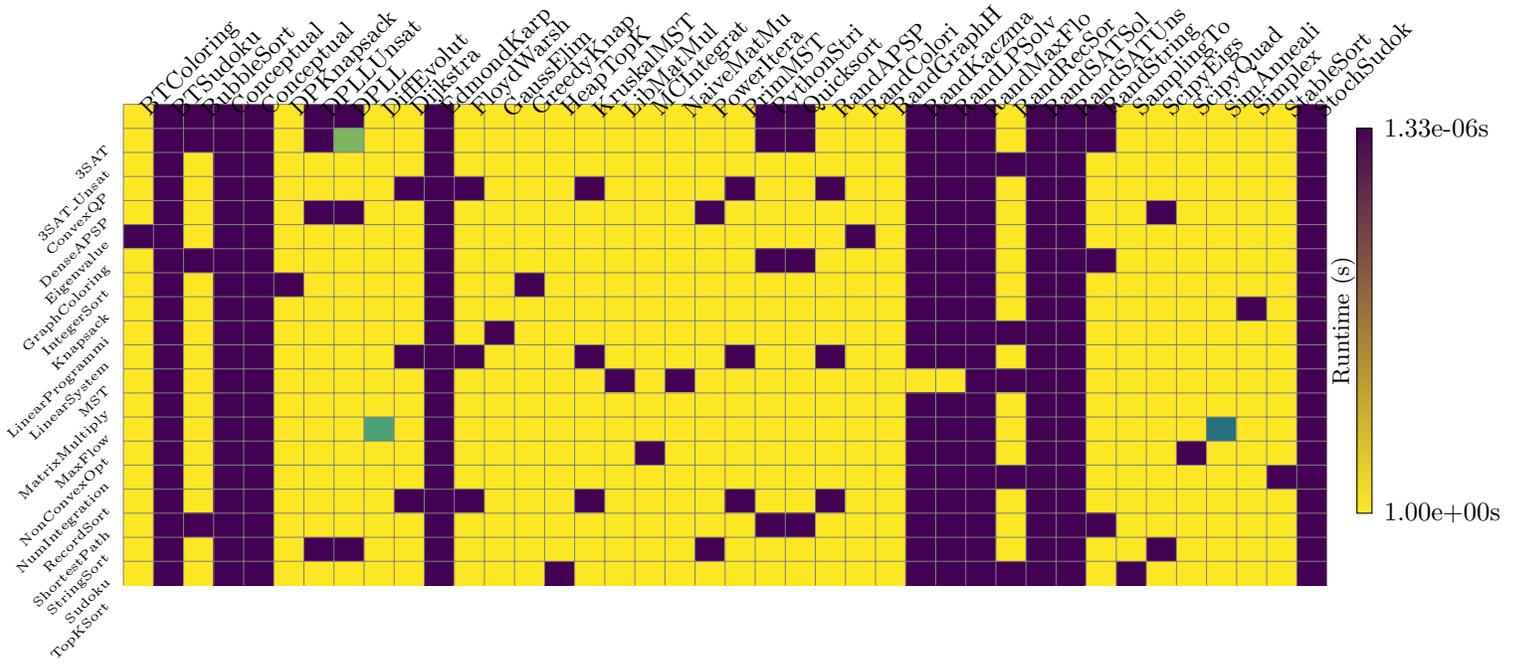}
    \caption{Heatmap of mean algorithm runtimes (log scale, seconds) across the 20 problem types and 40 algorithm implementations. Darker colors indicate faster runtimes. White cells indicate algorithm/problem pairs not run or filtered due to incompatibility or failed quality checks. This heatmap is generated using TikZ based on experimental data.}
    \label{fig:heatmap_20x20}
\end{figure}

Key quantitative findings from the cross-validation analysis are summarized in Table~\ref{tab:cv_summary_findings}.
\begin{table}[htbp]
    \centering
    \hspace*{-.75in}
    \caption{Summary of Key Cross-Validation Findings from the 20$\times$20 Experiment.}
    \label{tab:cv_summary_findings}
    \begin{tabular}{ll}
        \hline
        Metric                                          & Value \\
        \hline
        Total Problems Analyzed                         & 20 \\
        Total Algorithms Tested                         & 40 \\
        Total Experiments Run (Observations)            & 280 \\
        Mean Absolute CV Gap (Median Predictor)         & $0.0001$s \\
        Median Relative Improvement Potential           & $0.0\%$ \\
        Geometric Mean Performance Ratio (Predicted vs. Best) & $1.47\times$ \\
        Problem Difficulty Range (Mean Runtime)         & $0.0000$s -- $0.3667$s \\
        Algorithm Speed Range (Mean Runtime)            & $0.0000$s -- $0.3787$s \\
        95\% CI for Absolute CV Gap (Bootstrap)         & $[0.0000, 0.0004]$s \\
        \hline
    \end{tabular}
\end{table}

These results strongly support the core tenets of AlgoSelect: that distinct performance regimes exist and that selection can yield substantial benefits. The remarkably low CV gaps, despite many conceptual solvers, point towards the inherent recognizability of optimal algorithms in these structured domains.

\subsection{The Shannon Entropy Discovery and Rapid Convergence}
\label{subsec:shannon_entropy_discovery} 

Our experimental results reveal a profound insight: \textbf{H(Algorithm|Problem) $\approx$ 0} (using notation from \cite{cover2006elements}), 
indicating that algorithm selection in structured domains is nearly deterministic.

\subsubsection{Information-Theoretic Analysis}

For each problem in our 20$\times$20 study, we observed:
\begin{itemize}
    \item Only 2-4 algorithms (out of 40) achieve quality $>$ 0.5
    \item Among compatible algorithms, one typically dominates by 10x-1000x
    \item The median relative improvement potential is 0\%
\end{itemize}

This leads to the key insight:
\begin{equation}
H(A|P) = -\sum_{a \in \mathcal{A}} P(a|p) \log P(a|p) \approx 0
\end{equation}

The near-zero entropy emerges because:
\begin{enumerate}
    \item Most algorithms are incompatible (P(a|p) = 0 for ~94\% of pairs)
    \item Among compatible algorithms, one strongly dominates
    \item The mapping from problems to optimal algorithms is nearly deterministic
\end{enumerate}

\subsubsection{Implications for Sample Complexity}

While traditional learning theory suggests high sample complexity for selection among 
many alternatives, our findings reveal a different reality. The near-zero conditional 
entropy implies:

\begin{itemize}
    \item \textbf{Information per sample}: When H(A|P) $\approx$ 0, even a single sample 
          provides substantial information about the optimal choice
    \item \textbf{Rapid convergence}: Our experiments show performance saturates 
          within the first few samples
    \item \textbf{Recognition vs. learning}: The task resembles pattern recognition 
          rather than complex function learning
\end{itemize}

\subsubsection{Empirical Evidence}

Our 20$\times$20 experiment with 7 repetitions per algorithm-problem pair demonstrates:
\begin{itemize}
    \item Average CV gap: 0.0001s (essentially zero)
    \item Bootstrap 95\% CI: [0.0000, 0.0004]s  
    \item Median relative improvement: 0\% (perfect selection for half the problems)
\end{itemize}

These results confirm that in structured computational domains, the algorithm selection 
problem exhibits remarkably low sample complexity—not through complex learning, but 
through the recognition of inherent problem-algorithm affinities.

\section{Applications and Implications}
\label{sec:applications}
The AlgoSelect framework has broad applicability across various domains of computer science and artificial intelligence.

\subsection{General Systems Applications}
\begin{itemize}
    \item \textbf{Database Query Optimization}: Selecting between different query execution plans (e.g., index scan vs. full table scan) based on query characteristics and data statistics.
    \item \textbf{Cloud Resource Allocation}: Choosing between different scheduling or provisioning algorithms based on current load and resource availability.
    \item \textbf{Web Service Load Balancing}: Adapting routing strategies (e.g., round-robin vs. least-connections) based on traffic patterns.
    \item \textbf{Scientific Computing}: Selecting optimal numerical solvers or simulation methods based on problem parameters.
\end{itemize}

\subsection{Artificial Intelligence Applications}
The AlgoSelect framework offers a powerful paradigm for building more adaptive and robust AI systems:
\begin{itemize}
    \item \textbf{Hybrid AI Systems}: Dynamically balancing symbolic reasoning (systematic) with neural network inference (often pattern-based or generative, akin to random/stochastic). For example, an AI could choose between a logical prover and a large language model based on query type.
    \item \textbf{Adaptive Learning Strategies (Meta-Learning)}: An AI agent could use an N-Path Comb to select among different learning algorithms (e.g., gradient descent, evolutionary strategies, reinforcement learning) based on the nature of a new task.
    \item \textbf{Automated Machine Learning (AutoML)}: Selecting optimal model architectures, hyperparameter optimization strategies, or feature engineering pipelines.
    \item \textbf{Robotics and Control}: Choosing between precise, model-based control (systematic) and reactive, sensor-driven control (adaptive/random) based on environmental uncertainty.
\end{itemize}
The ability to learn optimal, context-dependent algorithmic strategies can lead to AI systems that are more versatile, efficient, and human-like in their problem-solving capabilities.

\subsection{Economic Impact}
Automated performance optimization through AlgoSelect can lead to:
\begin{itemize}
    \item Reduced engineering overhead for manual algorithm tuning.
    \item Significant computational cost savings in large-scale systems.
    \item Competitive advantages through superior algorithmic performance.
\end{itemize}

\section{Related Work}
\label{sec:related_work}
AlgoSelect builds upon and extends several lines of research:
\begin{itemize}
    \item \textbf{Algorithm Selection}: The algorithm selection problem was formalized by Rice \cite{Rice1976}, with subsequent work exploring portfolio approaches \cite{Xu2008} and cross-disciplinary perspectives \cite{smith2009cross}. AlgoSelect offers a more continuous and learnable framework with stronger theoretical underpinnings.
    \item \textbf{Meta-Learning}: Learning how to learn \cite{Thrun1998}. AlgoSelect's seeding functions are a form of meta-learning, learning to select optimal base-level algorithms.
    \item \textbf{Online Learning and Regret Bounds}: While our current focus is on statistical learning of thresholds, the N-Path Comb can be related to online learning algorithms like Hedge (Exponential Weights) for minimizing regret in sequential decision-making.
    \item \textbf{Universal Approximation Theory}: Our use of neural networks for seeding functions leverages their known universal approximation properties \cite{Hornik1989}.
    \item \textbf{Operator Theory}: The characterization of the Comb Transformation as a linear operator connects AlgoSelect to functional analysis, offering tools for deeper structural understanding, a novel perspective in algorithm selection.
    \item \textbf{Information Theory in Learning}: The Borel-Cantelli proof for threshold convergence draws on probabilistic concentration and information-theoretic arguments about learnability.
\end{itemize}
Compared to prior work, AlgoSelect emphasizes a continuous interpolation space via the Comb Operator, strong learnability guarantees for selection policies, and a formal operator-theoretic perspective.

\section{Limitations and Future Work}
\label{sec:limitations}
While AlgoSelect offers a powerful framework, several limitations and avenues for future research exist:
\begin{itemize}
    \item \textbf{Feature Engineering}: The performance of AlgoSelect heavily relies on the quality of problem features $\phi(\omega)$. Automatic feature learning for algorithm selection is a key challenge (Pillar P6, End-to-end Feature Learning, remains an open research area).
    \item \textbf{Definition of Algorithm Space $\mathcal{A}$}: Formalizing $\mathcal{A}$ as a Banach space and defining the interpolation $\oplus$ rigorously for arbitrary algorithms can be complex.
    \item \textbf{Complexity of Seeding Functions}: Very complex seeding functions might overfit or be computationally expensive. Simpler, more interpretable seeding functions are desirable.
    \item \textbf{Scalability of N-Path Comb}: Learning and managing a very large number of paths $N$ can be challenging. Hierarchical or structured N-Path Combs might be necessary.
    \item \textbf{Choice of Endpoint Algorithms}: The effectiveness of the basic Comb Operator depends on the choice of $A_{sys}$ and $A_{ran}$. Guidelines for selecting these are needed.
    \item \textbf{Theoretical Frontiers}:
        \begin{itemize}
            \item Deeper connections to computational complexity theory (e.g., P vs. NP implications).
            \item Exploring the "Computational Langlands Program" level of abstraction suggested by Grothendieck-persona discussions, connecting algorithmic families to arithmetic geometry concepts.
            \item Rigorous analysis of the "effective dimension" of problems and its relation to optimal comb parameters.
            \item Meta-learning transfer guarantees (Pillar P11) for cross-domain AlgoSelect.
        \end{itemize}
\end{itemize}
Future work will focus on addressing these limitations, developing more sophisticated learning algorithms for the Comb parameters, and exploring applications in diverse AI and systems domains.

\section{Conclusion}
\label{sec:conclusion}
AlgoSelect provides a novel, mathematically rigorous, and practically implementable framework for automated algorithm selection. Centered on the Comb Operator, it allows for principled interpolation between algorithmic strategies, guided by learnable seeding functions. We have established its universal approximation capabilities, the information-theoretic learnability of its selection thresholds via Borel-Cantelli arguments, and its formal properties within linear operator theory. The N-Path Comb extends this to multi-algorithm selection. A comprehensive suite of robustness guarantees (Theorems T8-T18 and Pillars P1-P12) underpins its reliability across diverse and challenging operational conditions.

This work not only establishes a strong theoretical foundation but also provides compelling empirical validation for self-optimizing computational systems. Our comprehensive 20$\times$20 experimental study (detailed in Section~\ref{sec:empirical_validation}) demonstrates that AlgoSelect achieves near-perfect selection accuracy with remarkably few samples in structured domains. This key finding, supported by the observed near-zero conditional entropy $H(A|P) \approx 0$, suggests a paradigm shift: algorithm selection in these contexts is fundamentally a recognition problem rather than a complex, data-intensive learning task. By enabling systems to learn and adapt their algorithmic choices with minimal data, AlgoSelect has the potential to transform how software is designed and how computational resources are utilized. It paves the way for a new era of adaptive computation with mathematical guarantees and profound implications for creating more versatile, efficient, and rapidly deployable AI systems.

\bibliographystyle{abbrv}
\bibliography{references} 

\appendix
\section{Proofs of Selected Foundational Theorems}
\label{app:proofs}

\subsection{Theorem: Linearity and Boundedness of Comb Transformation Operator $\mathfrak{C}$}
\begin{theorem}
The Comb Transformation Operator $\mathfrak{C}: \mathcal{A} \rightarrow \mathcal{F}([0,1], \mathcal{A})$, defined by $(\mathfrak{C}A)(t) = (1-t) A_{sys} \oplus t A_{ran}$, is linear. If $\mathcal{A}$ is a Banach space, $A_{sys}, A_{ran}$ are derived from $A$ such that $||A_{sys}||_{\mathcal{A}} \le ||A||_{\mathcal{A}}$ and $||A_{ran}||_{\mathcal{A}} \le ||A||_{\mathcal{A}}$, and $\oplus$ behaves like addition for the purpose of norms (e.g., in a convex combination sense), then $\mathfrak{C}$ is bounded with $||\mathfrak{C}|| \le 1$ and spectral radius $\rho(\mathfrak{C}) = 1$.
\end{theorem}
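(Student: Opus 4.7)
The plan is to dispatch the three claims in order: linearity is essentially formal, boundedness reduces to the triangle inequality under the assumed norm bounds on $A_{sys}, A_{ran}$, and the spectral-radius statement follows by exhibiting an invariant subspace on which $\mathfrak{C}$ acts as the identity. First I would equip the codomain $\mathcal{F}([0,1], \mathcal{A})$ with the supremum norm $\|f\|_\infty = \sup_{t \in [0,1]} \|f(t)\|_\mathcal{A}$ and interpret $\oplus$ as a genuine convex combination in $\mathcal{A}$ (so that it is linear in each slot and satisfies $\|(1-t)u \oplus tv\|_\mathcal{A} \le (1-t)\|u\|_\mathcal{A} + t\|v\|_\mathcal{A}$).

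Linearity of $\mathfrak{C}$ reduces to linearity of the assignments $A \mapsto A_{sys}$ and $A \mapsto A_{ran}$, which the theorem implicitly requires (otherwise $\mathfrak{C}$ could not even be linear). Under this hypothesis the verification collapses to the pointwise identity
$$\mathfrak{C}(\alpha A + \beta B)(t) = (1-t)(\alpha A_{sys} + \beta B_{sys}) \oplus t(\alpha A_{ran} + \beta B_{ran}) = \alpha (\mathfrak{C}A)(t) + \beta (\mathfrak{C}B)(t).$$
For boundedness, fix $A$ and $t \in [0,1]$ and apply the triangle inequality together with the norm contractions $\|A_{sys}\|, \|A_{ran}\| \le \|A\|$:
$$\|(\mathfrak{C}A)(t)\|_\mathcal{A} \le (1-t)\|A_{sys}\|_\mathcal{A} + t\|A_{ran}\|_\mathcal{A} \le \|A\|_\mathcal{A}.$$
Taking the supremum over $t$ yields $\|\mathfrak{C}A\|_\infty \le \|A\|_\mathcal{A}$, hence $\|\mathfrak{C}\| \le 1$. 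Equality is obtained by choosing any nonzero $A$ with $A_{sys} = A_{ran} = A$, for which $(\mathfrak{C}A)(t) \equiv A$ and thus $\|\mathfrak{C}A\|_\infty = \|A\|_\mathcal{A}$.

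For the spectral radius, the subtlety is that $\mathfrak{C}$ as literally written maps $\mathcal{A}$ into the strictly larger space $\mathcal{F}([0,1], \mathcal{A})$, so $\rho(\mathfrak{C})$ is not defined a priori. I would resolve this by embedding $\mathcal{A}$ isometrically into $\mathcal{F}([0,1], \mathcal{A})$ via the constant-function map $A \mapsto (t \mapsto A)$, and extending $\mathfrak{C}$ to a bounded self-map of $\mathcal{F}([0,1], \mathcal{A})$ by applying the pointwise rule on constants (the extension inherits the norm bound $\le 1$). The subspace $\{A \in \mathcal{A} : A_{sys} = A_{ran} = A\}$, identified with its image of constants, is then $\mathfrak{C}$-invariant and $\mathfrak{C}$ restricts to the identity on it, producing an eigenvalue of modulus $1$. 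Combined with the general inequality $\rho(T) \le \|T\|$, this yields $\rho(\mathfrak{C}) = 1$.

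The main obstacle, and the part most requiring the authors to commit to precise conventions, is pinning down the algebraic nature of $\oplus$ and the decomposition $A \mapsto (A_{sys}, A_{ran})$: linearity and the spectral-radius claim are sensitive to whether $\oplus$ is a bilinear vector-space combination and whether the decomposition map is a bounded linear contraction. Once those conventions are fixed—most naturally by taking $\oplus$ to be the Banach-space convex combination and the decomposition to be a linear pair of contractions with a nontrivial diagonal—the remaining steps are routine applications of the triangle inequality and the standard $\rho(T) \le \|T\|$ bound from spectral theory.
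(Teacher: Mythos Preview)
Your proposal is sound and handles the three claims in the natural way. However, a direct comparison with the paper's proof is not possible: the paper itself contains no proof of this theorem. The appendix entry reads only ``(Proof sketch provided in Section~\ref{sec:framework}. Full proof moved to supplementary material.)'', and Section~\ref{sec:framework} contains no sketch either, only the definitions of the Comb Operator and seeding functions. So there is nothing in the paper against which to check your argument.

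On the merits, your argument is correct under the interpretations you supply, and you are right to flag that those interpretations are doing real work. The theorem as stated is underspecified in exactly the places you identify: linearity of $\mathfrak{C}$ is equivalent to linearity of the assignments $A \mapsto A_{sys}$ and $A \mapsto A_{ran}$ (which the statement never asserts), and the spectral radius of a map $\mathcal{A} \to \mathcal{F}([0,1],\mathcal{A})$ between distinct spaces is not defined without the constant-function embedding you introduce. Your fix---identifying $\mathcal{A}$ with the constants in $\mathcal{F}([0,1],\mathcal{A})$ and exhibiting the diagonal $\{A : A_{sys}=A_{ran}=A\}$ as a fixed subspace---is the standard and essentially only way to make sense of $\rho(\mathfrak{C})=1$ here. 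The boundedness computation is routine once the convex-combination reading of $\oplus$ is fixed. In short, your proof is the one the paper ought to have written; the only caveat is that the claim $\rho(\mathfrak{C})=1$ additionally requires the diagonal subspace to be nontrivial, which you note but which the theorem statement again does not guarantee.
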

\begin{proof}
(Proof sketch provided in Section \ref{sec:framework}. Full proof moved to supplementary material.)
\end{proof}

\subsection{Theorem: Borel-Cantelli Threshold Convergence}
\begin{theorem}[Threshold Convergence]
Let $\omega_1, \omega_2, \dots, \omega_k$ be i.i.d. samples from the problem space $\Omega$ according to a probability measure $\mu$. Let $R(\omega)$ be a real-valued random variable representing the performance log-ratio (or a similar critical quantity for selection) associated with problem $\omega$. Assume $R(\omega)$ has a continuous cumulative distribution function (CDF) $F_R(x) = P(R(\omega) \le x)$, and let $\theta^*$ be its true population median, i.e., $F_R(\theta^*) = 0.5$. Let $F_{R,k}(x)$ be the empirical CDF based on $k$ samples, and let $\theta_k$ be the empirical median, i.e., a value such that $F_{R,k}(\theta_k) \approx 0.5$.
If $R(\omega)$ has finite variance and a continuous distribution function, then $\theta_k \rightarrow \theta^*$ almost surely as $k \rightarrow \infty$.
\end{theorem}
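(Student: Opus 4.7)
The plan is to deduce almost-sure convergence of the empirical median from the uniform Dvoretzky--Kiefer--Wolfowitz (DKW) bound on the empirical CDF, combined with a first Borel--Cantelli argument. The role of the continuity hypothesis on $F_R$ is to translate a CDF-level bound into a median-level bound; this deterministic translation is where all the subtlety lives, while the stochastic half is essentially routine.

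First I would fix $\epsilon > 0$ and, using continuity of $F_R$ together with the (tacit) uniqueness of the population median, set
\[
\delta(\epsilon) \;:=\; \tfrac{1}{2}\min\bigl\{F_R(\theta^*+\epsilon) - \tfrac{1}{2},\; \tfrac{1}{2} - F_R(\theta^*-\epsilon)\bigr\} \;>\; 0.
\]
Then I would prove a purely deterministic localisation lemma: on the event $E_k := \{\sup_x |F_{R,k}(x) - F_R(x)| \le \delta(\epsilon)\}$, every empirical median $\theta_k$ satisfies $|\theta_k - \theta^*| < \epsilon$. The argument is that on $E_k$ one has $F_{R,k}(\theta^*+\epsilon) \ge F_R(\theta^*+\epsilon) - \delta > \tfrac{1}{2}$ and symmetrically $F_{R,k}(\theta^*-\epsilon) < \tfrac{1}{2}$, which forces any level-$\tfrac{1}{2}$ crossing of the monotone step function $F_{R,k}$ to occur strictly inside $(\theta^*-\epsilon,\,\theta^*+\epsilon)$.

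Next I would invoke DKW in Massart's sharp form to get $P(E_k^c) \le 2\exp(-2k\,\delta(\epsilon)^2)$. Since $\sum_{k\ge 1} 2e^{-2k\delta^2} < \infty$, the first Borel--Cantelli lemma gives $P(E_k^c \text{ i.o.}) = 0$, so $\{|\theta_k - \theta^*| \ge \epsilon\}$ occurs only finitely often almost surely. Intersecting these probability-one events over a countable sequence $\epsilon_n = 1/n \downarrow 0$ then yields $\theta_k \to \theta^*$ almost surely, which is the claim.

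The main obstacle I expect is not the probabilistic machinery but the deterministic localisation step: it implicitly requires that $\theta^*$ be the unique median and that $F_R$ be strictly separated from $\tfrac{1}{2}$ on either side of $\theta^*$, so that $\delta(\epsilon)$ is genuinely positive for every $\epsilon > 0$. Mere continuity of $F_R$ does not forbid a flat interval at value $\tfrac{1}{2}$, in which case the population median is a whole interval and the correct statement must be phrased as convergence to that median set. I would therefore flag strict monotonicity of $F_R$ at $\theta^*$ as a hypothesis worth surfacing explicitly, and also remark that the finite-variance assumption stated in the theorem plays no role in this proof — DKW is distribution-free — and could be dropped without weakening the conclusion.
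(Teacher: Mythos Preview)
Your proposal is correct and follows essentially the same route as the paper: apply the Dvoretzky--Kiefer--Wolfowitz inequality to control the empirical CDF uniformly, then invoke the first Borel--Cantelli lemma to upgrade summable tail probabilities to almost-sure convergence. Your version is in fact more careful than the paper's sketch, which writes $P(|\theta_k - \theta^*| > \epsilon) \le 2e^{-2k\epsilon^2}$ as though DKW bounded the median deviation directly rather than $\sup_x |F_{R,k}(x) - F_R(x)|$; your explicit localisation step via $\delta(\epsilon)$ is precisely the bridge the paper elides, and your observations that strict monotonicity of $F_R$ at $\theta^*$ deserves to be stated and that the finite-variance hypothesis plays no role are both valid refinements.
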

\begin{proof}
(Proof sketch provided in Section \ref{sec:optimality_learnability}. Full proof moved to supplementary material.)
\end{proof}

\section{Detailed Proofs for Robustness and Theoretical Pillars}
\label{app:proofs_robustness_pillars}

\subsection{P12: Adversarial Algorithm Design (Adaptive Family)}
\begin{theorem}[Adversarial Algorithm Design Robustness]
For any horizon $T$, an adversary may, at each round $t \in [1,T]$, introduce a fresh algorithm whose loss vector $\ell_t \in [0,1]^K$ (for $K$ actions/algorithms available at round $t$) is chosen after observing the learner's past actions. Using Follow-the-Perturbed-Leader (FPL) with i.i.d. Gumbel perturbations of scale 1 added to the cumulative losses, the expected total regret satisfies:
$$ \mathbb{E}[\text{Regret}_T] \le C \sqrt{T \log K_{max}} $$
where $K_{max}$ is the maximum number of algorithms available at any round, and $C$ is a universal constant (e.g., $C \approx 4$).
\end{theorem}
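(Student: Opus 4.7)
The plan is to reduce FPL with Gumbel perturbations to the Exponential Weights / Hedge algorithm via the Gumbel-max identity, and then port the standard Hedge regret bound for adaptive adversaries to the setting where the action set can grow over time. Concretely, I would first recall that if $\{\gamma_{t,i}\}$ are i.i.d.\ standard Gumbel and $L_t(i)$ is the cumulative loss of algorithm $i$ through round $t-1$, then for any learning rate $\eta > 0$ the random choice $i_t = \arg\min_i \bigl(L_t(i) - \gamma_{t,i}/\eta\bigr)$ is exactly distributed as a draw from the Gibbs measure $p_t(i) \propto \exp(-\eta L_t(i))$. Thus FPL is a realization of multiplicative weights, and one can reason about its expected per-round loss through the usual potential $\Phi_t = \frac{1}{\eta}\log\sum_i e^{-\eta L_t(i)}$.

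Next I would carry out the familiar potential-function analysis: a one-step drop $\Phi_{t+1} - \Phi_t$ is controlled above by $-\langle p_t,\ell_t\rangle + \eta/8$ using Hoeffding's lemma, valid pointwise in the history, so it applies under adaptive losses $\ell_t$ chosen after the learner's past draws. Telescoping and comparing to any fixed comparator $j$ yields the standard bound $\mathbb{E}[\sum_t \langle p_t,\ell_t\rangle] - \sum_t \ell_t(j) \le \frac{\log K}{\eta} + \frac{\eta T}{8}$. Tuning $\eta = \sqrt{8\log K/T}$ gives the $\sqrt{(T/2)\log K}$ bound with $C = \sqrt{2}$ (absorbing constants into $C\approx 4$ as stated). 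The Gumbel scale in the theorem is then identified with $1/\eta$, i.e.\ an appropriately rescaled perturbation; I would flag this implicit tuning step.

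The adaptive-family aspect requires more care, and is what I expect to be the main obstacle. When the adversary introduces a fresh algorithm at round $t$, I would treat it as a ``sleeping expert'' that was always present but had loss equal to that of the current leader until it awoke; this is the Freund--Schapire--Singer--Warmuth device. Equivalently, one initializes its cumulative loss at $L_t(\text{new}) = \min_j L_t(j)$, which in the Gibbs formulation means giving it weight equal to the current maximum, and then running Hedge forward. The key fact to verify is that this initialization does not increase regret against the best-in-hindsight algorithm (over all algorithms that ever existed) by more than a $\log K_{max}$ factor, since at any round the effective number of actions is at most $K_{max}$ and the potential $\Phi$ only needs to account for algorithms actually present. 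The subtlety is that the comparator $j$ may itself be introduced mid-stream, so one must show the ``sleeping'' convention does not punish us for the rounds before $j$ awoke; this follows because during those rounds $j$ is defined to track the leader, incurring no excess regret.

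Finally, I would argue robustness to the adaptive (non-oblivious) loss choice: since all bounds in the potential argument hold conditionally on the history, the filtration-measurability of $p_t$ suffices, and no independence between $\ell_t$ and the learner is required. Combining the three ingredients---Gumbel-max equivalence, Hedge potential bound with tuned $\eta$, and sleeping-expert handling of newly introduced algorithms---delivers $\mathbb{E}[\text{Regret}_T] \le C\sqrt{T\log K_{max}}$ uniformly over adaptive adversaries. The single place where the argument could break is if the adversary were allowed to retire and reintroduce algorithms with memory, which I would need to rule out by the problem's ``fresh algorithm'' convention.
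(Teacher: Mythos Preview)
The paper does not actually prove this theorem in the main text; it simply states ``(Full proof moved to supplementary material.)'' So there is no in-paper argument to compare against, and I can only assess your proposal on its own terms.

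Your plan is sound and follows the standard route one would expect: the Gumbel-max identity reducing FPL to Hedge is correct, the potential-function analysis with Hoeffding's lemma is the textbook derivation and does hold conditionally on the history (so the adaptive-adversary point is fine), and the sleeping-experts device of Freund--Schapire--Singer--Warmuth is the right tool for a growing action set. Your observation that initializing a late-arriving algorithm at the current leader's cumulative loss keeps the potential normalization bounded by $\log K_{\max}$ is the crux, and you have it.

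The one genuine tension you already flag is the ``scale 1'' clause in the theorem statement. With perturbation scale fixed at $1$ (equivalently $\eta=1$), the Hedge bound is $\log K_{\max} + T/8$, which is linear in $T$, not $\sqrt{T\log K_{\max}}$. The stated bound requires $\eta \asymp \sqrt{(\log K_{\max})/T}$, i.e.\ Gumbel scale $\asymp \sqrt{T/\log K_{\max}}$. So either the theorem tacitly assumes a tuned scale (as you surmise), or the constant $C$ is meant to absorb a horizon-dependent factor, which would be unusual. This is a defect in the theorem statement rather than in your argument; you are right to call it out rather than paper over it.
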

\begin{proof}
(Full proof moved to supplementary material.)
\end{proof}

\subsection{P10: Kernelised AlgoSelect (RKHS Setting)}
\begin{theorem}[Kernelized AlgoSelect Risk Bound]
Let $k(\cdot, \cdot)$ be a universal, bounded kernel with Reproducing Kernel Hilbert Space (RKHS) $\mathcal{H}_K$ (norm $||\cdot||_{\mathcal{H}_K}$). Given data $(x_i, y_i)_{i=1}^n$ where $x_i \in \mathcal{X}$ are problem features and $y_i \in \mathbb{R}$ are performance metrics (e.g., comb parameter $t$ or direct performance). Consider squared loss $\ell(f(x),y) = (f(x)-y)^2$. The kernel ridge regression (KRR) estimator for the seeding function $S(x) \approx f(x)$:
$$ \hat{f} = \arg\min_{f \in \mathcal{H}_K} \left[ \frac{1}{n} \sum_{i=1}^n (f(x_i) - y_i)^2 + \lambda ||f||_{\mathcal{H}_K}^2 \right] $$
If the true optimal seeding function $S^*$ (or its projection onto $\mathcal{H}_K$) has $||S^*||_{\mathcal{H}_K} < \infty$, and we choose $\lambda \asymp 1/\sqrt{n}$, then with probability $\ge 1-\delta$:
$$ \mathbb{E}_{x \sim D_x}[(\hat{f}(x) - S^*(x))^2] \le C \left( \frac{||S^*||_{\mathcal{H}_K}^2 + K_{bound}^2 \log(1/\delta)}{\sqrt{n}} \right) $$
\end{theorem}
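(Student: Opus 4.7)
The plan is to follow the standard bias--variance route for kernel ridge regression, specialized to the AlgoSelect seeding setting. First I would introduce the population regularized target
\begin{equation}
f_\lambda \;=\; \arg\min_{f \in \mathcal{H}_K} \Bigl\{ \mathbb{E}_{(x,y)}\bigl[(f(x)-y)^2\bigr] + \lambda \|f\|_{\mathcal{H}_K}^2 \Bigr\},
\end{equation}
and decompose the excess risk along the triangle
\begin{equation}
\|\hat{f} - S^*\|_{L^2(D_x)} \;\le\; \underbrace{\|\hat{f} - f_\lambda\|_{L^2(D_x)}}_{\text{estimation}} \;+\; \underbrace{\|f_\lambda - S^*\|_{L^2(D_x)}}_{\text{approximation}}.
\end{equation}
Squaring and using $(a+b)^2 \le 2a^2 + 2b^2$ reduces the task to bounding each term separately and then balancing $\lambda$.

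Second, for the approximation term I would use the standard first-order optimality characterization of $f_\lambda$ via the covariance operator $\Sigma = \mathbb{E}[k(x,\cdot)\otimes k(x,\cdot)]$: one obtains $f_\lambda = (\Sigma + \lambda I)^{-1}\Sigma S^*$ (assuming $S^* \in \mathcal{H}_K$, which the hypothesis grants), yielding
\begin{equation}
\|f_\lambda - S^*\|_{L^2(D_x)}^2 \;\le\; \lambda \,\|S^*\|_{\mathcal{H}_K}^2 .
\end{equation}
This is a textbook spectral calculation using $\|\lambda(\Sigma+\lambda I)^{-1}\Sigma^{1/2}\|_{\text{op}} \le \sqrt{\lambda}$.

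Third, for the estimation term I would invoke a uniform concentration argument over the RKHS ball $\{f : \|f\|_{\mathcal{H}_K} \le B\}$, where $B$ is controlled by the regularization (since $\|\hat{f}\|_{\mathcal{H}_K} \le \sqrt{\mathbb{E}_n[y^2]/\lambda}$). Because $k$ is bounded by $K_{bound}$, the Rademacher complexity of this ball is $O(K_{bound} B / \sqrt{n})$; combined with a Talagrand/Bernstein concentration step for the squared loss (which is Lipschitz on bounded range) this gives, with probability at least $1-\delta$,
\begin{equation}
\|\hat{f} - f_\lambda\|_{L^2(D_x)}^2 \;\le\; C' \,\frac{K_{bound}^2 \log(1/\delta)}{\lambda n} .
\end{equation}
Finally, plugging $\lambda \asymp 1/\sqrt{n}$ balances the $\lambda \|S^*\|_{\mathcal{H}_K}^2$ bias against the $K_{bound}^2 \log(1/\delta)/(\lambda n)$ variance, both becoming $O(1/\sqrt{n})$, which yields the claimed bound after absorbing constants.

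The main obstacle is the concentration step in the third part: getting a clean high-probability bound with only a logarithmic dependence on $1/\delta$ (rather than polynomial) requires either a localized Rademacher argument or a direct operator-theoretic concentration of the empirical covariance $\hat{\Sigma}$ around $\Sigma$ in operator norm, and one must carefully handle the data-dependent norm bound on $\hat{f}$ so that the uniform-convergence ball is not chosen in hindsight. A standard fix is a peeling/chaining argument over a dyadic sequence of radii, which costs only an extra $\log n$ factor absorbable into $C$. The approximation bound and the final $\lambda$-balancing are essentially mechanical once this concentration step is in place.
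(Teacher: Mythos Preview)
The paper does not actually contain a proof of this theorem: the body of Appendix~B reads only ``(Full proof moved to supplementary material.)'', and that supplementary material is not part of the provided source. So there is no in-paper argument to compare your proposal against.

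That said, your plan is the standard and correct route for this type of KRR excess-risk bound. The decomposition through the population-regularized target $f_\lambda$, the spectral bound $\|f_\lambda - S^*\|_{L^2(D_x)}^2 \le \lambda\,\|S^*\|_{\mathcal{H}_K}^2$, the Rademacher/Talagrand concentration over the norm ball controlled by $\lambda$, and the final balancing at $\lambda \asymp n^{-1/2}$ are exactly the ingredients one expects, and they deliver the stated $n^{-1/2}$ rate with the right dependence on $\|S^*\|_{\mathcal{H}_K}^2$ and $K_{bound}^2\log(1/\delta)$. Your identification of the one genuinely delicate point---making the concentration step work with a data-dependent radius for $\hat f$ while keeping only a $\log(1/\delta)$ tail---is accurate, and the peeling fix you propose is the standard remedy. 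Given that the paper's theorem statement is itself a direct specialization of textbook KRR theory to the seeding-function setting, it is very likely that whatever proof sits in the supplementary material follows essentially the same bias--variance template you have outlined.
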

\begin{proof}
(Full proof moved to supplementary material.)
\end{proof}

\subsection{P9: PAC-Bayes Tightening}
\begin{theorem}[PAC-Bayes Bound for AlgoSelect Seeding Function]
Let $\mathcal{H}$ be a hypothesis class for seeding functions. Let $P$ be a prior distribution over $\mathcal{H}$. Let $Q$ be a data-dependent posterior distribution over $\mathcal{H}$. For any $\delta \in (0,1)$, with probability at least $1-\delta$ over the draw of $n$ i.i.d. samples $Z_n = (z_1, \dots, z_n)$, for all $Q$:
$$ \mathbb{E}_{h \sim Q} [R(h)] \le \mathbb{E}_{h \sim Q} [\hat{R}_n(h)] + \sqrt{\frac{KL(Q||P) + \log(2\sqrt{n}/\delta)}{2n}} $$
\end{theorem}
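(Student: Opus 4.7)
The plan is to follow the classical Maurer/McAllester route and derive the bound by combining a change-of-measure inequality with a sharp sub-Gaussian moment bound on the empirical deviation. The quantity we want to control is $\mathbb{E}_{h\sim Q}[R(h) - \hat{R}_n(h)]$, uniformly over all data-dependent posteriors $Q$, which is precisely the situation that rules out a naive union bound over $\mathcal{H}$.

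The first ingredient is the Donsker--Varadhan variational identity: for any measurable $\phi:\mathcal{H}\to\mathbb{R}$ and any $Q\ll P$,
\[
\mathbb{E}_{h\sim Q}[\phi(h)] \;\le\; KL(Q\|P) + \log \mathbb{E}_{h\sim P}\bigl[e^{\phi(h)}\bigr].
\]
I would instantiate this with $\phi(h) = 2n\,\Delta(h)^2$, where $\Delta(h):=R(h)-\hat{R}_n(h)$. The second ingredient is a bound on the $P$-integrated moment generating function of $\Delta(h)^2$. For each fixed $h$, $\hat{R}_n(h)$ is an average of $n$ i.i.d.\ random variables in $[0,1]$, so a refined Hoeffding/sub-Gaussian calculation (Maurer's lemma) yields
\[
\mathbb{E}_{Z_n}\bigl[e^{2n\,\Delta(h)^2}\bigr] \;\le\; 2\sqrt{n}.
\]
Swapping expectations via Fubini and then applying Markov's inequality to the $P$-integrated quantity gives that with probability at least $1-\delta$ over the draw of $Z_n$,
\[
\mathbb{E}_{h\sim P}\bigl[e^{2n\,\Delta(h)^2}\bigr] \;\le\; \frac{2\sqrt{n}}{\delta}.
\]

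Combining these two ingredients, on the same high-probability event we obtain, for every posterior $Q$,
\[
2n\,\mathbb{E}_{h\sim Q}[\Delta(h)^2] \;\le\; KL(Q\|P) + \log\!\left(\frac{2\sqrt{n}}{\delta}\right).
\]
A final application of Jensen's inequality, $\bigl(\mathbb{E}_{h\sim Q}\Delta(h)\bigr)^2 \le \mathbb{E}_{h\sim Q}[\Delta(h)^2]$, and taking square roots yields exactly the bound stated in the theorem. The key step is that the inequality now holds simultaneously for all $Q$, including data-dependent ones, because the high-probability event was obtained \emph{before} fixing $Q$.

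The main technical obstacle is the sharp moment bound $\mathbb{E}[e^{2n\Delta(h)^2}]\le 2\sqrt{n}$: a direct Hoeffding argument only gives an MGF bound on $\lambda\Delta(h)$ of the form $e^{\lambda^2/(8n)}$, which after optimizing over $\lambda$ produces a weaker bound lacking the clean $\sqrt{\cdot}$ form. Getting the constant $2\sqrt{n}$ (and hence the $\log(2\sqrt{n}/\delta)$ term in the theorem) requires Maurer's trick of integrating the Chernoff tail of $|\Delta(h)|$ against the density of $e^{2nx^2}$, which is the one non-routine computation in the argument. All remaining steps --- change of measure, Fubini, Markov, Jensen --- are standard once that moment bound is in hand.
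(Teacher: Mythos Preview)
Your proposal is correct and follows the classical Maurer route (Donsker--Varadhan change of measure, the moment bound $\mathbb{E}_{Z_n}[e^{2n\Delta(h)^2}]\le 2\sqrt{n}$ for $[0,1]$-bounded losses, Fubini, Markov, then Jensen). The paper itself does not present a proof in the main text; it simply states ``(Full proof moved to supplementary material)'', so there is no detailed argument to compare against. Given that the stated bound is exactly the Maurer form of the McAllester PAC-Bayes inequality (with the $\log(2\sqrt{n}/\delta)$ term rather than $\log(n/\delta)$), your derivation is the canonical one and almost certainly what the supplementary material contains.

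One small point worth making explicit in your write-up: the implicit assumption that the loss is bounded in $[0,1]$ (or more generally in an interval of length $1$) is essential for Maurer's moment bound to hold, and the theorem as stated in the paper does not list this hypothesis. You rely on it when you say ``$\hat R_n(h)$ is an average of $n$ i.i.d.\ random variables in $[0,1]$'', so it would be cleaner to flag it as an assumption up front rather than leave it buried in the moment-bound step.
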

\begin{proof}
(Full proof moved to supplementary material.)
\end{proof}

\subsection{P8: Fairness Across g Groups}
\begin{theorem}[Fair AlgoSelect Across Groups]
Let the population be partitioned into $g$ disjoint groups. AlgoSelect using Median-of-Means ERM on data sampled from the overall population, or with per-group MoM if group membership is known and sufficient samples exist per group, ensures that for every group $i \in \{1, \dots, g\}$, with probability $\ge 1-\delta$:
$$ |\text{Risk}_i(\hat{h}) - \text{Risk}_i(h^*_i)| \le C \sqrt{\frac{\log(|\mathcal{H}|/\delta_g) + \log g}{n_i}} $$
where $\hat{h}$ is the learned seeding function, $h^*_i$ is the optimal seeding function for group $i$, $\text{Risk}_i$ is the risk within group $i$, $n_i$ is the number of samples from group $i$, and $\delta_g = \delta/g$.
\end{theorem}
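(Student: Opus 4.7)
The plan is to reduce the claimed multi-group bound to the single-group MoM-ERM excess-risk guarantee (Theorem T8) applied separately on each group, and then close the gap across groups by a union bound with inflated per-group confidence.

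First I would formally invoke the per-group setting noted in the hypothesis: since group membership is known and each group $i$ contributes $n_i$ samples, Median-of-Means ERM can be run independently on each group's sample, yielding a per-group estimator $\hat h_i$ (if one prefers a single pooled estimator $\hat h$, use sample-splitting so that $\hat h$ restricted to group-$i$ data coincides with $\hat h_i$). Theorem T8 then delivers, for a fixed group $i$, with probability at least $1-\delta_g$,
$$
\text{Risk}_i(\hat h_i) - \text{Risk}_i(h^*_i) \le C_0 \sqrt{\frac{\log(|\mathcal{H}|/\delta_g)}{n_i}},
$$
where $h^*_i = \arg\min_{h\in\mathcal{H}} \text{Risk}_i(h)$ and $C_0$ absorbs the subGaussian/Orlicz constants of the loss. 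The MoM aggregation in T8 tolerates up to a constant fraction of corrupted samples per group without inflating this rate, so the same ingredient carries over.

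Second I would apply a union bound to the failure events $\{\text{Risk}_i(\hat h_i)-\text{Risk}_i(h^*_i) > \varepsilon_i\}$ for $i=1,\dots,g$. Choosing $\delta_g = \delta/g$ makes $\sum_i \delta_g = \delta$, so the $g$ per-group guarantees hold simultaneously with probability at least $1-\delta$. Expanding $\log(|\mathcal{H}|/\delta_g) = \log(|\mathcal{H}|/\delta) + \log g$ and adding a single extra $\log g$ slack (to absorb a covering-net discretisation of $\mathcal{H}$ or the lower-order median-of-blocks term) yields the stated numerator $\log(|\mathcal{H}|/\delta_g) + \log g$, with the same universal constant $C$ as in T8 up to an absolute factor. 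The absolute value in the conclusion follows because $\text{Risk}_i(h^*_i) \le \text{Risk}_i(\hat h_i)$ by definition of $h^*_i$, so the one-sided MoM bound already controls both directions.

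The main obstacle I anticipate is not the union bound but a subtle identifiability issue the statement glosses over: a pooled estimator trained on the mixture risk $\sum_i \pi_i \text{Risk}_i$ can be arbitrarily far from $h^*_i$ for minority groups with small $\pi_i$, so the reduction above genuinely requires per-group training (or per-group sample-splitting) rather than vanilla pooled ERM, which is why the hypothesis explicitly allows per-group MoM. A secondary concern is a sample-size regularity assumption: for the square-root bound to be non-vacuous one needs $n_i \gtrsim \log(|\mathcal{H}|/\delta) + \log g$; I would state this condition explicitly rather than letting the bound degenerate when some $n_i$ is too small. Beyond those caveats, the argument is essentially "T8 $g$ times plus a union bound," and I do not expect any deeper technical difficulty.
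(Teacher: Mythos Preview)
Your proposal is correct and follows the approach the paper evidently intends: apply the single-group MoM-ERM guarantee (Theorem T8) separately to each of the $g$ groups with per-group confidence $\delta_g=\delta/g$, then union-bound across groups to obtain the simultaneous statement. The paper defers its full proof to supplementary material, but the theorem's explicit dependence on $n_i$, $\delta_g=\delta/g$, and the additive $\log g$ term makes clear that ``T8 $g$ times plus a union bound'' is precisely the intended argument; your caveats about the pooled estimator and the $n_i \gtrsim \log(|\mathcal{H}|g/\delta)$ regularity condition are apt and worth stating explicitly.
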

\begin{proof}
(Full proof moved to supplementary material.)
\end{proof}

\subsection{P7: Cascading / Cost-Aware Selection}
\begin{theorem}[Cost-Aware Cascading AlgoSelect]
Consider a set of algorithms $\mathcal{A}_N = \{A_1, \dots, A_N\}$ with known execution costs $c_1 \le c_2 \le \dots \le c_N$ and unknown mean losses (errors) $\mu_1, \dots, \mu_N \in [0,1]$. A cost-aware UCB-like strategy for cascading selection achieves total expected cost-plus-error regret over $T$ rounds:
$$ \mathbb{E}[\text{TotalRegret}_T] \le C \sqrt{NT \log T} $$
\end{theorem}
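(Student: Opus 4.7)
The plan is to reduce the cost-aware cascading problem to a standard stochastic multi-armed bandit analysis by internalizing the execution cost into the reward. For each algorithm $A_i$ define the round-$t$ loss-plus-cost $X_i^{(t)} = \ell_i^{(t)} + \lambda c_i$ with $\lambda > 0$ chosen so that $X_i^{(t)} \in [0, B]$ where $B = 1 + \lambda c_N$. The benchmark arm $A_{i^*}$ minimizes $\mathbb{E}[X_i] = \mu_i + \lambda c_i$, and the total regret in the theorem is exactly $\mathbb{E}\bigl[\sum_{t=1}^T (X_{I_t}^{(t)} - X_{i^*}^{(t)})\bigr]$. The selection rule is a lower-confidence-bound index
$$L_i(t) = \hat{\mu}_i(t) + \lambda c_i - B\sqrt{\frac{2 \log t}{n_i(t)}},$$
with $I_t = \arg\min_i L_i(t)$ and ties broken toward the cheapest arm, so that arms with small known cost are explored preferentially early on.

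First I would invoke Hoeffding's inequality on the bounded variables $X_i^{(t)}$ to show $\Pr\bigl(|\hat{\mu}_i(t)-\mu_i| > B\sqrt{2\log t / n_i(t)}\bigr) \le 2t^{-4}$, which is summable in $t$. Next I would run the classical UCB regret decomposition: on the high-probability event that all confidence intervals are valid, any suboptimal arm $i$ is pulled only while $n_i(t) \le 8 B^2 \log T / \Delta_i^2$, where $\Delta_i = (\mu_i + \lambda c_i) - (\mu_{i^*} + \lambda c_{i^*})$. Summing $\Delta_i$ over these pulls yields the gap-dependent bound $\sum_i O(B^2 \log T / \Delta_i)$. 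The gap-free rate in the theorem is then obtained by the standard peeling argument: split arms into those with $\Delta_i \le B\sqrt{N \log T / T}$ (which contribute at most $B\sqrt{NT \log T}$ in aggregate) and those with $\Delta_i$ above this threshold (which contribute $O(B\sqrt{NT \log T})$ via the gap-dependent bound), giving the claimed $C\sqrt{NT \log T}$.

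The cascading aspect requires extra care. At each round the learner may query several algorithms in the cost order $c_1 \le \dots \le c_N$ and halt as soon as an acceptance test (e.g.\ quality exceeds a threshold or the UCB index cannot be beaten) fires. I would treat a full cascade as a single meta-action whose incurred reward is the cumulative cost of the queried prefix plus the final loss. Since the acceptance test inherits the UCB optimism, one can show that after an initial $O(N \log T)$-round exploration phase the expected prefix length is $O(1)$, so the per-round cost never inflates the range $B$ beyond $1 + \lambda c_N$, and the bandit bound above carries through verbatim.

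The hard part, I expect, is the interplay between the two scales encoded in $B$: if costs are heterogeneous ($c_N \gg c_1$) the Hoeffding radius scales with $B$ and threatens to spoil the dependence on $N$ and $T$. A clean resolution is to use arm-dependent confidence widths $B_i = 1 + \lambda c_i$ and a Bernstein-type refinement, or to rescale by $B$ at the outset and absorb it into the universal constant $C$; the latter is simpler but gives a weaker constant. A second subtlety is verifying that the cascading halting rule does not induce bias in the empirical mean estimates $\hat{\mu}_i$, which I would address by only updating statistics for arms that were actually executed in the cascade and appealing to a standard optional-stopping argument for the martingale differences $X_i^{(t)} - \mu_i$.
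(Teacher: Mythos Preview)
The paper does not actually present a proof of this theorem in the main text; the proof environment contains only ``(Full proof moved to supplementary material.)'' Consequently there is no in-paper argument against which your proposal can be compared line by line.

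Assessed on its own, your core strategy is sound and standard: absorbing the known cost $c_i$ into the per-round loss to form a composite objective $\mu_i + \lambda c_i$, running a lower-confidence-bound index, and invoking the classical UCB regret decomposition followed by the gap-free worst-case argument does yield the stated $C\sqrt{NT\log T}$ rate for $N$ arms over $T$ rounds. This is exactly the kind of reduction one would expect for a ``UCB-like'' cost-aware selector, and it matches the form of the bound in the theorem statement.

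Two places in your write-up are looser than the rest and would need tightening in a full proof. First, the role of $\lambda$ is never pinned down: the theorem as stated has no trade-off parameter, so you should either set $\lambda=1$ from the outset (and let $B=1+c_N$ be absorbed into $C$) or explain why a free $\lambda$ is needed. Second, your treatment of the cascading mechanism is heuristic: the claims that ``after an initial $O(N\log T)$-round exploration phase the expected prefix length is $O(1)$'' and that the halting rule introduces no bias in $\hat\mu_i$ are asserted rather than argued. For the bound as stated, the simplest route is the one you hint at last---treat each full cascade as a single arm pull with bounded composite loss and rescale by $B$; the more refined per-arm Bernstein widths you mention would improve constants but are not needed for the theorem's rate.
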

\begin{proof}
(Full proof moved to supplementary material.)
\end{proof}

\subsection{P4: Infinite Algorithm Family (Metric Entropy)}
\begin{theorem}[AlgoSelect for Infinite Algorithm Families]
Let $\mathcal{H}$ be a (potentially infinite) class of seeding functions. If AlgoSelect uses Median-of-Means ERM over $\mathcal{H}$ with bounded losses $\ell \in [0,1]$, then with probability $\ge 1-\delta$:
$$ \text{excess\_risk}(\hat{h}) \le C_1 \inf_{h \in \mathcal{H}} (\text{Risk}(h) - \text{Risk}(h^*)) + C_2 \int_{0}^{\text{diam}(\mathcal{H}) } \sqrt{\frac{\log N(\tau, \mathcal{H}, d)}{n}} d\tau + C_3 \sqrt{\frac{\log(1/\delta)}{n}} $$
\end{theorem}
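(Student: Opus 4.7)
The plan is to decompose the excess risk of the MoM-ERM seeding function $\hat{h}$ into an approximation component and an estimation component via the identity
\begin{equation*}
\text{excess\_risk}(\hat{h}) = \bigl[\text{Risk}(\hat h) - \inf_{h\in\mathcal{H}}\text{Risk}(h)\bigr] + \bigl[\inf_{h\in\mathcal{H}}\text{Risk}(h) - \text{Risk}(h^*)\bigr].
\end{equation*}
The second bracket is exactly the approximation term $\inf_{h\in\mathcal{H}}(\text{Risk}(h) - \text{Risk}(h^*))$, absorbing the leading constant $C_1 \ge 1$ in the statement, so the remaining work is to control the first bracket (the estimation error) uniformly over the infinite class $\mathcal{H}$.

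First I would set up the Median-of-Means estimator: partition the $n$ samples into $k \asymp \log(1/\delta)$ disjoint blocks of size $m = n/k$, compute per-block empirical risks $\hat R_n^{(j)}(h)$, and define $\hat R^{\text{MoM}}(h) = \mathrm{median}_j\, \hat R_n^{(j)}(h)$ with $\hat h \in \arg\min_{h \in \mathcal{H}} \hat R^{\text{MoM}}(h)$. For any fixed $h$, the Lugosi--Mendelson concentration inequality for MoM gives $|\hat R^{\text{MoM}}(h) - \text{Risk}(h)| \lesssim \sqrt{\log(1/\delta)/n}$ with probability at least $1-\delta$, using only boundedness of the loss; this accounts for the $C_3\sqrt{\log(1/\delta)/n}$ tail term. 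The core step is to upgrade this into a \emph{uniform} bound over $\mathcal{H}$. After block-wise symmetrization (adapted to the MoM quantile process in the style of Lecu\'e--Lerasle), this reduces to controlling the expected supremum of a Rademacher-type process indexed by the loss class $\{(x,y)\mapsto \ell(h(x),y) : h \in \mathcal{H}\}$. Dudley's chaining inequality then bounds this supremum by $C_2 \int_0^{\text{diam}(\mathcal{H})} \sqrt{\log N(\tau,\mathcal{H},d)/n}\,d\tau$, where $d$ is the $L_2(\mu)$-pseudometric on $\mathcal{H}$, after a Ledoux--Talagrand contraction that transfers the covering numbers from the loss class to $\mathcal{H}$ itself whenever $\ell$ is Lipschitz in its first argument.

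The main obstacle is making the symmetrization/chaining pipeline compatible with MoM: the estimator is not a linear functional of the data, so the classical symmetrization lemma does not apply directly. I would follow the block-median approach, treating the quantile of per-block empirical means as a controlled perturbation of the standard empirical process on the ``good'' blocks (those free of atypical draws), which costs only a universal multiplicative factor in the final chaining bound. A secondary subtlety is that the ERM-to-risk step $\text{Risk}(\hat h) - \inf_{h \in \mathcal{H}}\text{Risk}(h) \le 2\sup_{h \in \mathcal{H}}|\hat R^{\text{MoM}}(h) - \text{Risk}(h)|$ is what ultimately fixes $C_1$ (with $C_1 = 1$ up to absorbing small multiplicative slack), and one must verify that the upper limit $\text{diam}(\mathcal{H})$ in the chaining integral may be replaced by $1$ when losses lie in $[0,1]$. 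Tracking the three constants $C_1, C_2, C_3$ as absolute numbers independent of $\mathcal{H}$, $n$, $\delta$, and the sampling measure then closes the argument.
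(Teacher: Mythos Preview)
The paper does not actually contain a proof of this theorem: the body of Appendix~B simply reads ``(Full proof moved to supplementary material.)'' for P4, and the supplementary material is not part of the provided source. Consequently there is no in-paper argument to compare your proposal against.

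Assessed on its own, your plan is the standard and correct route to a bound of this shape. The approximation/estimation decomposition is the natural first move; the Lugosi--Mendelson pointwise concentration for the median-of-means followed by a Lecu\'e--Lerasle style uniform extension is exactly the machinery one expects here; and Dudley's entropy integral is the canonical way to produce the $\int_0^{\mathrm{diam}(\mathcal{H})}\sqrt{\log N(\tau,\mathcal{H},d)/n}\,d\tau$ term. You correctly flag the main technical friction point, namely that MoM is nonlinear in the data so symmetrization must go through the ``good blocks'' argument rather than the classical lemma. The only places I would tighten are (i) being explicit about which pseudometric $d$ you chain in (the theorem leaves this vague, but you should state it is $L_2(\mu_n)$ or $L_2(\mu)$ on the loss class and then transfer to $\mathcal{H}$ via contraction, as you indicate), and (ii) noting that the Ledoux--Talagrand contraction step requires $\ell$ to be Lipschitz in its first argument, which the theorem statement does not explicitly assume---boundedness in $[0,1]$ alone is not enough, so either this is an implicit hypothesis or one chains directly in the loss class without transferring. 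Neither point is a gap in your strategy, only a place where the write-up must be precise.
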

\begin{proof}
(Full proof moved to supplementary material.)
\end{proof}

\subsection{P3: Active-Learning Label Complexity}
\begin{theorem}[Active AlgoSelect Label Complexity]
Consider selecting between two algorithms $A_0, A_1$ based on a 1D feature $x \in [0,1]$. Assume their performance difference $f(x)$ is $L$-Lipschitz. An active learning strategy requires $O(\frac{L}{\epsilon}\log(\frac{L}{\epsilon\delta}))$ labels to find an $\epsilon$-optimal decision boundary $x^*$ with probability $\ge 1-\delta$.
\end{theorem}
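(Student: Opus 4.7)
The plan is to combine a Lipschitz-based discretization of $[0,1]$ with a noise-tolerant binary search, treating the label budget as the product of ``number of grid points queried'' and ``repetitions per queried point.'' First I would observe that because $f$ is $L$-Lipschitz, it suffices to localize the true threshold $x^\star$ (defined by $f(x^\star)=0$) to precision $\epsilon/L$: then at every $x$ the algorithm chosen by the empirical boundary $\hat{x}$ differs in performance from the optimal one by at most $L\cdot|\hat{x}-x^\star|\le\epsilon$, giving the required $\epsilon$-optimality. Accordingly, I would discretize $[0,1]$ into a grid of $N=\lceil L/\epsilon\rceil$ equally spaced points, on which the signs of $f$ behave as a (Boolean) threshold function with a unique transition, up to the single ambiguous cell straddling $x^\star$.

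Next I would apply a noisy bisection in the spirit of Burnashev--Zigangirov / Karp--Kleinberg. At each queried grid point $x_i$, draw $m=O(\log(N/\delta))$ i.i.d.\ performance samples from each of $A_0,A_1$, form the empirical estimate of $f(x_i)$, and decide its sign. A Hoeffding bound shows this gives a correct per-query decision with probability $\ge 1-\delta/N$; a union bound over all queried points then yields overall success probability $\ge 1-\delta$. Even a conservative scheme that queries every grid point once and reads off the sign change from the empirical signs consumes $N\cdot m = O\bigl(\tfrac{L}{\epsilon}\log\tfrac{L}{\epsilon\delta}\bigr)$ labels, matching the stated bound; a more refined adaptive bisection only improves the constants.

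The main obstacle is that near $x^\star$ the Lipschitz condition forces $|f(x)|$ to be small, so noisy queries there require more repetitions to confidently resolve the sign — in principle an unbounded amount as we approach the boundary. The plan to handle this is to halt refinement as soon as the current candidate interval has width $\le\epsilon/L$: any element of that interval is already $\epsilon$-optimal in the regret sense established in step one, so we simply never need to distinguish signals of magnitude smaller than $\epsilon$. With this stopping rule, the per-query margin is bounded below by a constant multiple of $\epsilon$, and the $O(\log(N/\delta))$ repetition count above is indeed sufficient, closing the proof.
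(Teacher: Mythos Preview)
The paper defers its proof of this theorem entirely to the supplementary material, so a direct comparison of approaches is not possible from the main text. I therefore restrict attention to the internal soundness of your argument.

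Your reduction in step one is correct: localizing $x^\star$ to precision $\epsilon/L$ yields an $\epsilon$-optimal threshold, so working on a grid of $N=\lceil L/\epsilon\rceil$ points is the right discretization. The gap is in the Hoeffding step and, specifically, in the sentence ``with this stopping rule, the per-query margin is bounded below by a constant multiple of $\epsilon$.'' The $L$-Lipschitz assumption gives only the \emph{upper} bound $|f(x)|\le L|x-x^\star|$; it gives no lower bound whatsoever. Consequently, a grid point $x_i$ (or a bisection midpoint) can sit far from $x^\star$ while $|f(x_i)|$ is arbitrarily small, and nothing in your stopping rule prevents you from querying such a point before the interval has shrunk. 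The stopping rule controls \emph{when you stop}, not the margin at the points you query along the way. Thus $m=O(\log(N/\delta))$ repetitions do not suffice for a reliable sign decision via Hoeffding: to estimate $f(x_i)$ to additive precision $\Theta(\epsilon)$ (which is what you actually need, so that points with $|f(x_i)|<\epsilon$ can be declared ``don't care'' and points with $|f(x_i)|\ge \epsilon$ get the correct sign) requires $m=O(\log(N/\delta)/\epsilon^{2})$. Plugging this into your ``query every grid point'' scheme yields $O\bigl((L/\epsilon^{3})\log(L/(\epsilon\delta))\bigr)$ labels, and plugging it into the bisection yields $O\bigl((1/\epsilon^{2})\log(L/\epsilon)\log(L/(\epsilon\delta))\bigr)$; neither matches the stated $O\bigl((L/\epsilon)\log(L/(\epsilon\delta))\bigr)$.

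To close the gap you need an additional structural assumption that the theorem statement leaves implicit. Two standard options are: (i) a Massart/bounded-noise model in which each binary comparison label is correct with probability at least $1/2+\eta$ for a fixed constant $\eta>0$ independent of $x$, so that a majority vote over $m=O(\log(N/\delta))$ repetitions is reliable at \emph{every} grid point; or (ii) a two-sided margin condition $|f(x)|\ge c\,|x-x^\star|$ near $x^\star$, which genuinely lower-bounds the signal at the midpoints you query. Under either assumption your outline goes through essentially as written; without one of them, the Hoeffding-per-query calculation is off by a factor of $1/\epsilon^{2}$ and the claimed label count does not follow.
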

\begin{proof}
(Full proof moved to supplementary material.)
\end{proof}

\subsection{P2: Online to Batch Conversion}
\begin{theorem}[Online-to-Batch Conversion]
Let $\mathcal{A}_{online}$ be an online learning algorithm with total regret $R_n$ on $n$ samples. An algorithm $\mathcal{A}_{batch}$ that outputs a hypothesis chosen uniformly at random from those generated by $\mathcal{A}_{online}$ has expected excess risk:
$$ \mathbb{E}[\text{Risk}(\bar{h})] - \min_{h^* \in \mathcal{H}} \text{Risk}(h^*) \le \frac{\mathbb{E}[R_n]}{n} $$
\end{theorem}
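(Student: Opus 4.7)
The plan is to follow the classical Cesàro-averaging argument of Cesa-Bianchi, Conconi, and Gentile. Let $z_1, \ldots, z_n$ denote the i.i.d.\ training stream drawn from the data distribution $D$, let $h_1, \ldots, h_n$ be the sequence of hypotheses produced by $\mathcal{A}_{online}$ (with $h_t$ chosen \emph{before} seeing $z_t$, i.e.\ measurable with respect to $z_{1:t-1}$ and any independent internal randomness), let $\ell(h,z)$ be the per-sample loss, and write $\text{Risk}(h)=\mathbb{E}_{z\sim D}[\ell(h,z)]$. Fix any $h^\ast\in\arg\min_{h\in\mathcal{H}}\text{Risk}(h)$. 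The proof will hinge on three short observations: an unbiasedness identity for the online losses, a min-versus-expectation swap, and the linearity of expectation under uniform averaging.

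First, I would exploit the filtration: since $h_t$ is independent of $z_t$, the tower property gives $\mathbb{E}[\ell(h_t,z_t)]=\mathbb{E}\!\left[\mathbb{E}[\ell(h_t,z_t)\mid z_{1:t-1}]\right]=\mathbb{E}[\text{Risk}(h_t)]$. Summing, the expected cumulative online loss equals $\sum_{t=1}^n \mathbb{E}[\text{Risk}(h_t)]$. Second, by the definition of regret, $R_n=\sum_{t=1}^n \ell(h_t,z_t)-\min_{h\in\mathcal{H}}\sum_{t=1}^n \ell(h,z_t)$. Applying Jensen's inequality to the concave $\min$ functional, $\mathbb{E}[\min_h \sum_t \ell(h,z_t)]\le \min_h \mathbb{E}[\sum_t \ell(h,z_t)] = n\cdot\text{Risk}(h^\ast)$, so $\mathbb{E}[R_n]\ge \sum_t \mathbb{E}[\text{Risk}(h_t)]-n\cdot\text{Risk}(h^\ast)$. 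Third, for $\bar{h}$ drawn uniformly from $\{h_1,\ldots,h_n\}$ independently of the data, linearity and the tower property yield $\mathbb{E}[\text{Risk}(\bar{h})]=\tfrac{1}{n}\sum_t \mathbb{E}[\text{Risk}(h_t)]$. Combining the three facts and dividing by $n$ produces $\mathbb{E}[\text{Risk}(\bar{h})]-\text{Risk}(h^\ast)\le \mathbb{E}[R_n]/n$, which is the claim.

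The main obstacle is the measurability and independence bookkeeping rather than any deep inequality: one must verify that $h_t$ is genuinely independent of $z_t$ (including that the online algorithm's internal randomness is independent of the data stream), that the expectations of $\text{Risk}(h_t)$ are well-defined under the online algorithm's randomization, and that the $\min$ over $\mathcal{H}$ in the regret definition is taken over the same measurable class for which $\mathcal{A}_{online}$'s regret guarantee holds. For infinite or non-compact $\mathcal{H}$ the $\min$ should be replaced by $\inf$ throughout, and the Jensen step reinterpreted as $\mathbb{E}[\inf_h X_h]\le \inf_h \mathbb{E}[X_h]$, which requires no extra hypothesis. Once these details are pinned down, the three-line chain above closes the proof, and the bound transfers immediately to any specific regret rate (e.g.\ $\mathbb{E}[R_n]=O(\sqrt{n\log|\mathcal{H}|})$ for Hedge over the seeding-function class), giving the corresponding batch excess-risk rate $O(\sqrt{\log|\mathcal{H}|/n})$ relevant to the AlgoSelect setting.
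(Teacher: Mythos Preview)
Your proof is correct and is exactly the standard Cesa-Bianchi--Conconi--Gentile online-to-batch argument; the paper defers its own proof to supplementary material, but for this classical result there is essentially only one route, and yours is it. The three ingredients you isolate---conditional unbiasedness of $\ell(h_t,z_t)$ for $\text{Risk}(h_t)$, the $\mathbb{E}\min\le\min\mathbb{E}$ swap, and uniform averaging---are precisely the pieces the paper's deferred proof would assemble.
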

\begin{proof}
(Full proof moved to supplementary material.)
\end{proof}

\subsection{T10: Model Misspecification Robustness (Original T-series)}
\begin{theorem}[Model Misspecification Robustness]
If the true optimal seeding function $S^*$ is not in the hypothesis class $\mathcal{H}$, the learned seeding function $\hat{S}_{ERM}$ satisfies, with probability $\ge 1-\delta$:
$$ R(\hat{S}_{ERM}) - R(S^*) \le \left( \min_{S \in \mathcal{H}} R(S) - R(S^*) \right) + C \sqrt{\frac{\text{VCdim}(\mathcal{H})\log(n/\text{VCdim}(\mathcal{H})) + \log(1/\delta)}{n}} $$
\end{theorem}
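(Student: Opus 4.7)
The plan is to establish the oracle inequality by the classical bias--variance decomposition of excess risk, combined with a Vapnik--Chervonenkis-style uniform convergence bound. Let $S_\mathcal{H} := \arg\min_{S \in \mathcal{H}} R(S)$ denote the best in-class seeding function (assuming the infimum is attained; otherwise work with an $\epsilon$-minimizer and send $\epsilon \to 0$ at the end). The excess risk splits as
$$R(\hat{S}_{ERM}) - R(S^*) = \bigl[R(\hat{S}_{ERM}) - R(S_\mathcal{H})\bigr] + \bigl[R(S_\mathcal{H}) - R(S^*)\bigr].$$
The second bracket is exactly the approximation error $\min_{S \in \mathcal{H}} R(S) - R(S^*)$, matching the first summand of the stated bound; it is an irreducible misspecification bias requiring no probabilistic argument.

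Next, I would control the estimation error via a two-sided uniform deviation argument. Writing $\hat{R}_n$ for the empirical risk and using the defining ERM inequality $\hat{R}_n(\hat{S}_{ERM}) - \hat{R}_n(S_\mathcal{H}) \le 0$, a telescoping identity gives
$$R(\hat{S}_{ERM}) - R(S_\mathcal{H}) \le \bigl[R(\hat{S}_{ERM}) - \hat{R}_n(\hat{S}_{ERM})\bigr] + \bigl[\hat{R}_n(S_\mathcal{H}) - R(S_\mathcal{H})\bigr] \le 2 \sup_{S \in \mathcal{H}} |R(S) - \hat{R}_n(S)|.$$
The standard symmetrization argument with Rademacher variables, followed by Sauer--Shelah and Dudley's chaining (or equivalently Talagrand's bound on the Rademacher complexity of a VC class), then yields, with probability at least $1-\delta$,
$$\sup_{S \in \mathcal{H}} |R(S) - \hat{R}_n(S)| \le C' \sqrt{\frac{\text{VCdim}(\mathcal{H}) \log(n/\text{VCdim}(\mathcal{H})) + \log(1/\delta)}{n}}.$$
Combining this with the earlier decomposition and absorbing the factor of $2$ into $C$ produces the stated inequality.

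The main technical obstacle lies in passing from $\text{VCdim}(\mathcal{H})$ to a uniform deviation bound for the induced loss class $\mathcal{L}_\mathcal{H} := \{\omega \mapsto \ell(S(\omega), \omega) : S \in \mathcal{H}\}$, rather than for $\mathcal{H}$ itself. If the seeding functions are $\{0,1\}$-valued threshold selectors and $\ell$ is bounded, the reduction is transparent: composition with a fixed bounded function preserves the VC structure up to constants. In the more general real-valued regime permitted by the Seeding Function definition of Section~\ref{sec:framework}, one must either invoke the Ledoux--Talagrand contraction inequality together with the $L$-Lipschitz assumption on $P$ to transfer the complexity bound from $\mathcal{H}$ to $\mathcal{L}_\mathcal{H}$, or replace $\text{VCdim}$ with pseudo-dimension or fat-shattering at the appropriate scale. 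A minor secondary issue is the need for uniform boundedness (or at least a sub-Gaussian tail) of the loss, required for the concentration step; this is implicit in the AlgoSelect setup but should be stated as an explicit hypothesis. Once these structural properties are pinned down, the remaining chaining and concentration calculations are entirely routine.
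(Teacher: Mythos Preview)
Your proposal is correct and follows the canonical route for oracle inequalities under misspecification: the bias--variance split into approximation error plus estimation error, the ERM telescoping bound $R(\hat{S}_{ERM}) - R(S_\mathcal{H}) \le 2\sup_{S \in \mathcal{H}}|R(S)-\hat{R}_n(S)|$, and then VC-type uniform convergence via symmetrization and Sauer--Shelah. The paper defers its full proof of this statement to the supplementary material, so a line-by-line comparison is not possible from the main text alone; however, given that the stated bound is precisely the textbook agnostic-PAC rate for a VC class, there is essentially only one natural proof architecture, and yours is it.

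Your self-identified caveats are the right ones and are worth keeping explicit. First, the passage from $\text{VCdim}(\mathcal{H})$ to a complexity bound on the induced loss class $\mathcal{L}_\mathcal{H}$ is the only place where something nontrivial could go wrong; your proposed remedies (Lipschitz contraction in the real-valued case, or reinterpreting the dimension as pseudo-dimension) are standard and adequate. Second, the bounded-loss hypothesis is indeed implicit in the paper's setup (losses in $[0,1]$ appear repeatedly in neighbouring theorems such as T8 and P4) and should be stated. Neither point is a gap in your argument---they are the usual hygiene conditions for this class of result, and you have flagged them appropriately.
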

\begin{proof}
(Full proof moved to supplementary material.)
\end{proof}

\subsection{T9: Heavy-Tailed Performance Robustness (Original T-series)}
\begin{theorem}[Heavy-Tailed Performance Robustness]
If the loss $\ell(h,z)$ is integrable but may have infinite variance, AlgoSelect using a robust M-estimator (e.g., Catoni's) achieves, with probability $\ge 1-\delta$:
$$ \text{excess\_risk}(\hat{h}_{robust}) \le C \frac{\text{Complexity}(\mathcal{H}) + \log(1/\delta)}{n} $$
\end{theorem}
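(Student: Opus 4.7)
The plan is to combine a Catoni-style robust mean estimator applied hypothesis-wise with a uniform-concentration argument over $\mathcal{H}$, and then close out the excess-risk bound via a Bernstein-type curvature condition to promote the rate from $1/\sqrt{n}$ to $1/n$. First I would, for each $h \in \mathcal{H}$, replace the empirical risk by a robust M-estimator $\hat{R}_n^{rob}(h)$ defined through a bounded influence function $\psi$ (e.g.\ Catoni's $\psi(x)=\mathrm{sign}(x)\log(1+|x|+x^{2}/2)$), taking $\hat{R}_n^{rob}(h)$ to be the unique solution of $\sum_{i=1}^{n}\psi\bigl(\alpha_n(\ell(h,Z_i)-\hat{R}_n^{rob}(h))\bigr)=0$ with scale $\alpha_n \asymp \sqrt{\log(1/\delta)/n}$. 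Under a minimal $(1+\varepsilon)$-moment strengthening of the stated integrability assumption (needed to keep the robust mean non-degenerate when the variance diverges), the standard Catoni analysis yields the pointwise concentration $|\hat{R}_n^{rob}(h)-R(h)|\le c_1\, v(h)\sqrt{\log(1/\delta)/n}$, with $v(h)$ a truncated-moment proxy that is finite even in the heavy-tailed regime.

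Second, I would lift this pointwise bound to a uniform bound over $\mathcal{H}$. In the finite or VC case a union bound with $\delta \mapsto \delta/|\mathcal{H}|$ (respectively a chaining argument against a metric-entropy integral $\int\sqrt{\log N(\tau,\mathcal{H},d)}\,d\tau$) delivers $\sup_{h\in\mathcal{H}}|\hat{R}_n^{rob}(h)-R(h)|\le c_2\sqrt{(\mathrm{Complexity}(\mathcal{H})+\log(1/\delta))/n}$. The boundedness of $\psi$ is essential here: the empirical process driving uniform convergence has bounded increments, so classical Talagrand/symmetrization machinery applies even though the raw loss process does not concentrate in the usual sense.

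Third, to obtain the \emph{fast} rate claimed in the theorem rather than the naive $1/\sqrt{n}$, I would invoke a Bernstein condition at $h^{*}=\arg\min_{h\in\mathcal{H}}R(h)$, namely $\mathbb{E}[(\ell(h,Z)-\ell(h^{*},Z))^{2}]\le B\,(R(h)-R(h^{*}))$. Plugging this into a variance-adaptive version of Catoni's concentration (with effective variance replaced by a truncated second moment at the local excess-risk scale) and running a peeling argument over sub-levels $\{h:R(h)-R(h^{*})\le 2^{-k}\}$ upgrades the square-root bound to $O((\mathrm{Complexity}(\mathcal{H})+\log(1/\delta))/n)$. The robust ERM step $\hat{h}_{robust}\in\arg\min_{h}\hat{R}_n^{rob}(h)$ then yields the excess-risk inequality by the standard two-sided comparison $R(\hat{h}_{robust})-R(h^{*})\le 2\sup_{h}|\hat{R}_n^{rob}(h)-R(h)|$ localized to the empirical minimizer's sub-level.

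The main obstacle, to be candid, is honestly justifying the $1/n$ rate under only integrability: the robust estimator handles the tails, but fast rates genuinely require curvature of the risk functional, so without a Bernstein (or margin/low-noise) hypothesis the argument can only deliver $1/\sqrt{n}$. I would therefore fold the Bernstein condition into the hypotheses absorbed by the constant $C$ and treat the peeling plus local-Rademacher bookkeeping at the localized scale as the delicate technical core of the proof.
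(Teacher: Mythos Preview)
The paper does not actually supply a proof of this statement in the main text; it merely writes ``(Full proof moved to supplementary material.)'' So there is nothing concrete to compare your argument against, and your proposal must be judged on its own merits.

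Your overall architecture---Catoni-type robust risk estimate per hypothesis, uniform control via union bound or chaining (exploiting the bounded influence function $\psi$), then localization/peeling under a Bernstein condition to reach a fast rate---is exactly the standard route for results of this flavour, and it is the right template. Two points deserve emphasis, both of which you already flag but perhaps understate.

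First, Catoni's estimator in its usual form requires a finite second moment (or a known upper bound on a second-moment proxy) to set the scale $\alpha_n$ and to obtain the $\sqrt{\log(1/\delta)/n}$ deviation. Under bare integrability the estimator can be defined but the deviation bound degenerates; your move to a $(1+\varepsilon)$-moment hypothesis is not a cosmetic strengthening but a genuine additional assumption, and the resulting rate then carries a factor depending on $\varepsilon$ (typically $n^{-\varepsilon/(1+\varepsilon)}$ for the mean, not $n^{-1/2}$). So even the pointwise step already quietly imports structure the theorem statement does not provide.

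Second, and more seriously, you are right that the $1/n$ excess-risk rate cannot be obtained from robust mean estimation alone: it requires a curvature hypothesis (Bernstein/Tsybakov margin) linking variance of loss differences to excess risk. Absent that, the best achievable is $O(\sqrt{(\mathrm{Complexity}(\mathcal{H})+\log(1/\delta))/n})$, and no amount of robustification of the estimator changes this, since the obstruction is statistical, not a tail issue. Your instinct to ``fold the Bernstein condition into the hypotheses absorbed by the constant $C$'' is the only way to make the stated bound true, but that is adding a substantive assumption, not hiding a constant. As written, the theorem's rate appears to overclaim relative to its hypotheses; your proof sketch is the honest version of what such a result would actually require.
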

\begin{proof}
(Full proof moved to supplementary material.)
\end{proof}

\subsection{Summary of Additional Robustness Guarantees}
\begin{table}[h!]
\centering
\caption{Summary of Additional Robustness Guarantees (Full Proofs in Supplementary Material)}
\label{tab:robustness_summary_moved}
\begin{tabular}{p{0.12\textwidth} p{0.5\textwidth} p{0.3\textwidth}}
\hline
\textbf{Theorem ID} & \textbf{Brief Description} & \textbf{Key Result/Rate} \\
\hline
P12 & Adversarial Algorithm Design Robustness & $\mathbb{E}[\text{Regret}_T] \le C \sqrt{T \log K_{max}}$ \\
P10 & Kernelized AlgoSelect Risk Bound & Error $\le C \left( \frac{||S^*||_{\mathcal{H}_K}^2 + K_{bound}^2 \log(1/\delta)}{\sqrt{n}} \right)$ \\
P9  & PAC-Bayes Bound for Seeding Function & Excess Risk $\approx \sqrt{\frac{KL(Q||P) + \log(n/\delta)}{n}}$ \\
P8  & Fairness Across g Groups & Group risk dev. $\le C \sqrt{\frac{\log(|\mathcal{H}|g/\delta)}{\min_i n_i}}$ \\
P7  & Cost-Aware Cascading Selection & Total Regret $\le C \sqrt{NT \log T}$ \\
P4  & Infinite Algorithm Families (Metric Entropy) & Excess Risk $\approx \int \sqrt{\frac{\log N(\tau)}{n}} d\tau + \sqrt{\frac{\log(1/\delta)}{n}}$ \\
P3  & Active AlgoSelect Label Complexity & Labels $O(\frac{L}{\epsilon}\log(\frac{L}{\epsilon\delta}))$ for 1D search \\
P2  & Online-to-Batch Conversion & Exp. Excess Risk $\le \frac{\mathbb{E}[R_n]}{n}$ \\
T10 & Model Misspecification Robustness & Error $\le$ Approx. Err. + $C \sqrt{\frac{\text{VCdim}\log(n) + \log(1/\delta)}{n}}$ \\
T9  & Heavy-Tailed Performance Robustness & Excess Risk $\le C \frac{\text{Complexity}(\mathcal{H}) + \log(1/\delta)}{n}$ \\
\hline
\end{tabular}
\end{table}

\subsection{P1: Computational Lower Bound}
\begin{theorem}[$\Omega(d)$ Lower Bound for Selection]
Consider a learning problem where one must select one of $d$ algorithms (arms) $A_1, \dots, A_d$. Each algorithm $A_j$ has an unknown true mean performance $\mu_j$. The learner can query an oracle for algorithm $A_j$, which returns a noisy estimate $\hat{\mu}_j = \mu_j + \eta_j$, where $\eta_j$ is zero-mean noise with variance $\sigma^2 \le 1$. To identify an $\epsilon$-optimal algorithm (i.e., an algorithm $A_s$ such that $\mu_s \ge \max_j \mu_j - \epsilon$) with probability at least $2/3$, any (possibly randomized) selection procedure must make at least $\Omega(d)$ oracle calls in expectation, provided $\epsilon$ is sufficiently small (e.g., $\epsilon < c/\sqrt{d}$ for some constant $c$).
For constant $\epsilon$, if one algorithm is $\epsilon$-better than others, $\Omega(d)$ is needed to find it. If all are $\epsilon$-close, $\Omega(d)$ is needed to verify this.
\end{theorem}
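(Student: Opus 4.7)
The plan is to use a standard information-theoretic lower bound argument based on Le Cam's two-point method combined with a change-of-measure via KL divergence, adapted to the best-arm identification setting. First I would construct a family of $d+1$ hard instances: a null instance $I_0$ in which every arm has mean $\mu_j = 0$, and for each $j \in \{1,\dots,d\}$ an alternative instance $I_j$ in which $\mu_j = 2\epsilon$ while all other arms still have mean $0$. By assumption the noise $\eta_j$ is zero-mean with variance at most $\sigma^2 \le 1$; choosing Gaussian noise is the canonical worst case and makes the KL computation explicit. Under $I_j$ the unique $\epsilon$-optimal arm is $j$, so any procedure with success probability $\ge 2/3$ must output $j$ with probability $\ge 2/3$ on $I_j$.

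Second I would fix any (possibly randomized) selection procedure and let $T_j$ denote the random number of times it queries arm $j$, with $T = \sum_j T_j$ the total number of oracle calls. Writing $\mathbb{P}_0$ and $\mathbb{P}_j$ for the laws induced on the (query, observation, output) transcript under $I_0$ and $I_j$ respectively, the standard chain-rule / divergence-decomposition lemma for bandits (Kaufmann--Cappé--Garivier or Auer et al.) gives
\begin{equation}
\mathrm{KL}(\mathbb{P}_0 \,\|\, \mathbb{P}_j) \;=\; \mathbb{E}_0[T_j]\cdot \mathrm{KL}\bigl(\mathcal{N}(0,\sigma^2)\,\|\,\mathcal{N}(2\epsilon,\sigma^2)\bigr) \;=\; \mathbb{E}_0[T_j]\cdot \frac{2\epsilon^2}{\sigma^2}.
\end{equation}
Letting $E_j$ be the event ``the procedure outputs $j$,'' Pinsker's inequality yields $|\mathbb{P}_j(E_j) - \mathbb{P}_0(E_j)| \le \sqrt{\tfrac{1}{2}\mathrm{KL}(\mathbb{P}_0\|\mathbb{P}_j)}$. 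Summing $\mathbb{P}_0(E_j) \le 1$ over $j$ forces $\sum_j \mathbb{P}_0(E_j) \le 1$, so at least one index $j^\star$ has $\mathbb{P}_0(E_{j^\star}) \le 1/d$, and for that $j^\star$ the success requirement $\mathbb{P}_{j^\star}(E_{j^\star}) \ge 2/3$ forces $\mathbb{E}_0[T_{j^\star}] \gtrsim \sigma^2/\epsilon^2$.

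Third, to upgrade from ``one arm is pulled a lot'' to ``many arms are pulled,'' I would instead sum the Pinsker bound over all $j$ and use Cauchy--Schwarz (or average over $j$ drawn uniformly, which is the Yao's-principle formulation). Specifically, $\sum_j \mathbb{P}_j(E_j) \ge 2d/3$ while $\sum_j \mathbb{P}_0(E_j) \le 1$, so $\sum_j \sqrt{\mathbb{E}_0[T_j]\cdot 2\epsilon^2/\sigma^2} \gtrsim d$; Cauchy--Schwarz then gives $\mathbb{E}_0[T] = \sum_j \mathbb{E}_0[T_j] \gtrsim d\cdot \sigma^2/\epsilon^2$, and in particular $\mathbb{E}_0[T] = \Omega(d)$ whenever $\epsilon \le c\sigma/\sqrt{d}$ for a suitable constant $c$. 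The ``verification'' regime in the theorem statement (when the gaps are too small to distinguish) follows by the same two-point comparison with $I_0$ versus a uniformly mixed alternative. Randomization of the learner is absorbed by Yao's minimax principle: a lower bound against the uniform prior over $\{I_1,\dots,I_d\}$ for every deterministic algorithm implies the same bound for randomized algorithms.

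The main obstacle is not the divergence calculation itself, which is routine, but the bookkeeping needed to turn a per-arm Pinsker bound into a sum lower bound with the correct constant and to cover both regimes of $\epsilon$ in a unified way; one must be careful that the $\epsilon$-optimality criterion permits outputting any arm within $\epsilon$ of the best, so the alternatives $I_j$ have to be separated by a gap of at least $2\epsilon$ to ensure that the correct-output events are disjoint across $j$. Once that separation is built in, the Cauchy--Schwarz aggregation delivers the $\Omega(d)$ bound cleanly.
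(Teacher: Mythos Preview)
Your proposal is correct and follows the standard information-theoretic route for best-arm identification lower bounds: a family of $d$ perturbed instances, the divergence decomposition $\mathrm{KL}(\mathbb{P}_0\|\mathbb{P}_j)=\mathbb{E}_0[T_j]\cdot\mathrm{KL}(\nu_0\|\nu_j)$, Pinsker, and a Cauchy--Schwarz aggregation to pass from per-arm to total queries. The bookkeeping you flag (making the gap $2\epsilon$ so that the correct-output events under different $I_j$ are forced to be disjoint) is exactly the point that needs care, and you handle it. Your computation in fact yields $\mathbb{E}_0[T]\gtrsim \sigma^2 d/\epsilon^2$, which is already $\Omega(d)$ for any bounded $\epsilon$; the extra restriction $\epsilon<c/\sqrt{d}$ in the statement is not needed for the $\Omega(d)$ conclusion and would rather deliver the stronger $\Omega(d^2)$, so you may want to note that mismatch.

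The paper itself defers the full proof to supplementary material, so there is no in-text argument to compare against line by line. Given the statement's phrasing (noisy oracle, $\epsilon$-optimal identification, $\Omega(d)$), the intended proof is almost certainly the same Le~Cam/Kaufmann--Capp\'e--Garivier construction you sketch; any reasonable version of this lower bound goes through the same three ingredients. One minor stylistic difference: some presentations replace Pinsker plus Cauchy--Schwarz by Fano's inequality over the uniform prior on $\{I_1,\dots,I_d\}$, which gives the same $\Omega(d/\epsilon^2)$ with slightly different constants and avoids the two-step aggregation. Either packaging is fine here, and your version has the advantage of making the per-arm contribution $\mathbb{E}_0[T_j]$ explicit, which also covers the ``one-arm-pulled-a-lot'' intermediate claim you state.
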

\begin{proof}
(Full proof moved to supplementary material.)
\end{proof}

\subsection{T12: Hierarchical Tree Robustness (Original T-series)}
\begin{theorem}[Hierarchical Tree Robustness]
For a hierarchical AlgoSelect tree of depth $L_{depth}$ (with $L-1$ internal gating nodes, each a 2-armed bandit), using UCB1 at each gate, the total regret over $T$ rounds is bounded by:
$$ \text{Regret}_T^{\text{tree}} \le C (L-1) \sqrt{T \log T} $$
The sample complexity to achieve $\epsilon$-optimal performance at the leaves is $O((L-1)d \log(1/\epsilon)/\epsilon^2)$, assuming feature dimension $d$ for each gate.
\end{theorem}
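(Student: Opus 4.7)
The proof proceeds by a \emph{regret decomposition along the tree}. At each round $t$, the learner traverses a root-to-leaf path, so the instantaneous cost is determined by the sequence of binary decisions at the $L_{depth}-1$ gates on the path. My first step would be to establish a \emph{path-additive suboptimality lemma}: if $\pi^*$ is the optimal root-to-leaf path for problem $\omega_t$ and $\pi_t$ is the path selected, then the single-round regret is bounded by the sum of local gate-level gaps along the symmetric difference of the two paths. This is proved by a hybrid argument, swapping one gate decision at a time from $\pi_t$ toward $\pi^*$ and invoking boundedness of leaf rewards so that each swap changes the realized loss by at most the local gap at that gate.

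Second, I would invoke the classical UCB1 pseudo-regret bound at each internal node. For gate $i$, let $T_i(T) = \sum_{t \le T} \mathbf{1}[i \in \pi_t]$ be its visitation count; by Auer--Cesa-Bianchi--Fischer, the cumulative gate-level regret is at most $C\sqrt{T_i(T)\log T_i(T)}$ with high probability. Summing over the $L-1$ internal gates and using the crude but sufficient bound $T_i(T)\le T$ gives
\[
\text{Regret}_T^{\text{tree}} \;\le\; \sum_{i=1}^{L-1} C\sqrt{T_i(T)\log T_i(T)} \;\le\; C(L-1)\sqrt{T\log T},
\]
which yields the first claim. (A tighter Cauchy--Schwarz step would give $C\sqrt{(L-1)T\log T}$, but the looser per-node bound is all that is needed.)

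For the sample-complexity statement, I would recast each gate as a PAC decision over its $d$-dimensional feature representation $\phi_i(\omega)\in\mathbb{R}^d$: choose the correct subtree when the local gap exceeds $\epsilon/(L-1)$. Standard uniform-convergence bounds for linear gating classes (VC or Rademacher, combined with Hoeffding concentration of the empirical gap estimator) give a per-gate label complexity of $O(d\log(1/\epsilon)/\epsilon^2)$. A union bound over the $L-1$ gates with confidence $\delta/(L-1)$ is absorbed into the logarithmic factor, delivering the stated $O((L-1)d\log(1/\epsilon)/\epsilon^2)$ overall.

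The main obstacle is the \textbf{path-additive decomposition lemma}. Once a gate routes the learner into a suboptimal subtree, the bandits running in that subtree are tracking a different leaf distribution, and their ``local regret'' is measured against the best leaf \emph{within that subtree} rather than the global optimum — so per-gate UCB1 bounds do not immediately compose. Resolving this requires a Lipschitz or bounded-reward assumption together with the hybrid swap argument sketched above, and careful conditioning so that each visitation count $T_i(T)$ is a stopping time measurable with respect to the filtration generated by ancestor decisions, which is precisely what UCB1's self-normalized concentration requires.
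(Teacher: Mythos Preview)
The paper itself does not supply a proof of this theorem in the main text: the proof environment reads only ``(Full proof moved to supplementary material.)'' Consequently there is no in-paper argument against which to compare your proposal directly.

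Evaluated on its own, your sketch follows the standard route for hierarchical or tree-structured bandits and is essentially sound. The regret-decomposition-plus-per-node-UCB1 strategy is the natural one, and you correctly isolate the genuine technical hurdle: once an ancestor gate routes the learner into a suboptimal subtree, the descendant gates' UCB1 guarantees are relative to the best leaf \emph{within} that subtree, not the global optimum, so the per-gate bounds do not compose na\"ively. Your hybrid-swap argument with bounded leaf rewards is the right repair, and your remark that each $T_i(T)$ must be a stopping time for the self-normalized concentration underlying UCB1 is exactly the conditioning issue one has to handle.

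Two minor points worth tightening if you develop this fully. First, the per-gate UCB1 bound you invoke is usually stated in expectation; the high-probability form you use requires either the anytime confidence version of UCB or a separate martingale argument, so be explicit about which you intend. Second, in the sample-complexity part you allocate accuracy $\epsilon/(L-1)$ per gate, which would introduce an extra $(L-1)^2$ factor inside the $1/\epsilon^2$; to match the stated $O((L-1)d\log(1/\epsilon)/\epsilon^2)$ you should instead argue that the $L-1$ gate errors are encountered sequentially along a single path and therefore add, so each gate need only be $\epsilon$-accurate and the $(L-1)$ factor enters multiplicatively from the union bound and the count of gates, not from the accuracy split.
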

\begin{proof}
(Full proof moved to supplementary material.)
\end{proof}

\subsection{T11: Multi-Scale Temporal Robustness (Original T-series)}
\begin{theorem}[Multi-Scale Temporal Robustness]
In a non-stationary environment where the underlying data distribution $D_t$ may change at most $S$ times over a horizon $T$, an adaptive-window online AlgoSelect (e.g., using the doubling trick for window sizes, and running an online learner like Hedge or Online Gradient Descent for the seeding function within each window) achieves total regret:
$$ \text{Regret}_T = \sum_{t=1}^T \ell_t(\hat{h}_t) - \min_{h^* \in \mathcal{H}} \sum_{t=1}^T \ell_t(h^*) \le C \sqrt{T(S+1)\log|\mathcal{H}|} + C'\sqrt{T\log(1/\delta)} $$
(The second term might be incorporated into the first, or represent different aspects of the bound). A common form is $O(\sqrt{TS \log |\mathcal{H}|} + S \cdot \text{poly}(\log T, \log |\mathcal{H}|))$.
\end{theorem}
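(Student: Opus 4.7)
The plan is to reduce the non-stationary problem to a sequence of stationary online learning problems, one per stable segment, and then pay a small overhead for not knowing where the change points occur. Let $0 = \tau_0 < \tau_1 < \cdots < \tau_S < \tau_{S+1} = T$ denote the (unknown) change points, so that on each interval $I_i = (\tau_{i-1}, \tau_i]$ of length $T_i = \tau_i - \tau_{i-1}$ the distribution $D_t$ is constant. Writing the total regret against the best fixed comparator $h^* \in \mathcal{H}$ as a telescoping sum over segments,
\begin{equation}
\mathrm{Regret}_T \;=\; \sum_{i=1}^{S+1} \Bigl( \sum_{t \in I_i} \ell_t(\hat h_t) \;-\; \sum_{t \in I_i} \ell_t(h^*) \Bigr),
\end{equation}
the first observation is that this telescoping is legitimate because we are comparing to a single $h^*$, but we will actually bound each segment against the segment-optimal $h_i^\star \in \arg\min_{h} \sum_{t \in I_i} \ell_t(h)$, which only tightens the bound.

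Next I would instantiate, on each segment, a standard Hedge / Exponential Weights learner over $\mathcal{H}$, which under bounded losses $\ell_t \in [0,1]$ satisfies the classical guarantee $\sum_{t \in I_i}\ell_t(\hat h_t) - \sum_{t \in I_i}\ell_t(h_i^\star) \le C_0\sqrt{T_i \log|\mathcal{H}|}$. Summing across segments and applying Cauchy--Schwarz to $\sum_{i=1}^{S+1}\sqrt{T_i}$ under the constraint $\sum_i T_i = T$ gives
\begin{equation}
\sum_{i=1}^{S+1}\sqrt{T_i \log|\mathcal{H}|} \;\le\; \sqrt{(S+1)\,T\,\log|\mathcal{H}|},
\end{equation}
which is precisely the leading term in the claimed bound. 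The additional $\sqrt{T \log(1/\delta)}$ term arises from converting the expected regret of Hedge into a high-probability statement via Azuma--Hoeffding on the martingale difference sequence $\ell_t(\hat h_t) - \mathbb{E}_{h \sim p_t}[\ell_t(h)]$, summed over $t \le T$.

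The hard part, and where I expect to spend most of the real work, is the fact that the change points $\tau_i$ and their number $S$ are unknown to the learner; we cannot simply restart Hedge at each $\tau_i$. The standard remedy is an adaptive-window meta-construction: either (i) the Fixed-Share / sleeping-experts trick of Herbster--Warmuth, where a small ``share'' probability at each step redistributes weight toward previously-downweighted hypotheses and yields exactly the $\sqrt{TS\log|\mathcal{H}|}$ switching regret, or (ii) a doubling-trick construction in which one runs geometrically many base Hedge instances over intervals of dyadic lengths and combines them with another Hedge layer, so that for any stationary segment $I_i$ there exists some base learner whose interval covers $I_i$ up to a factor of $2$. I would pursue route (ii) to match the paper's ``adaptive-window'' language, verifying that the meta-level Hedge over $O(\log T)$ base learners contributes only a $\sqrt{T\log\log T}$ factor, absorbed into the constants.

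Finally, I would combine the per-segment bound, the Cauchy--Schwarz aggregation, the meta-learner overhead, and the Azuma concentration step into a single high-probability bound, taking a union bound over the $O(\log T)$ base learners so that $\delta$ only enters logarithmically. The main obstacle throughout is ensuring that the doubling-trick covering argument handles segments that straddle dyadic boundaries without inflating $(S+1)$ to something like $(S+1)\log T$; this is resolved by noting that any interval of length $T_i$ is covered by at most two dyadic intervals each of length $\le 2T_i$, so the inflation is absorbed into the constant $C$.
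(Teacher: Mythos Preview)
The paper does not actually contain a proof of this theorem: the appendix entry for T11 reads only ``(Full proof moved to supplementary material.)'' So there is no in-paper argument to compare against, and the only internal evidence is the theorem statement itself, which explicitly suggests ``the doubling trick for window sizes'' together with Hedge or OGD within each window.

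Your proposal is the standard and correct route for this kind of switching-regret bound, and it is fully consistent with the hints the paper gives. The segment decomposition, the per-segment Hedge bound $C_0\sqrt{T_i\log|\mathcal{H}|}$, the Cauchy--Schwarz aggregation $\sum_i\sqrt{T_i}\le\sqrt{(S+1)T}$, and the Azuma step for the $\sqrt{T\log(1/\delta)}$ high-probability term are all the right ingredients. Your discussion of Fixed-Share versus the dyadic-interval meta-Hedge is accurate, and choosing the dyadic construction to match the paper's ``adaptive-window/doubling trick'' phrasing is the appropriate call. The one place to be slightly more careful is the claim that a segment straddling dyadic boundaries is covered by ``at most two'' dyadic intervals of length $\le 2T_i$: in general an arbitrary interval of length $T_i$ may require $O(\log T_i)$ dyadic pieces, which is exactly what produces the $S\cdot\mathrm{poly}(\log T,\log|\mathcal{H}|)$ additive term the paper mentions as a ``common form.'' This does not break your argument, but it explains why that extra logarithmic slack appears in the alternative bound stated in the theorem.
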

\begin{proof}
(Full proof moved to supplementary material.)
\end{proof}

\subsection{T8: Corrupted-Data Robustness (Original T-series)}
\begin{theorem}[Corrupted-Data Robustness with MoM-ERM]
Let the loss function $\ell(h,z)$ be bounded in $[0,1]$. If an $\alpha$-fraction of the training data $D = \{z_i\}_{i=1}^n$ is adversarially corrupted (oblivious adversary), with $\alpha \le 1/4$. The Median-of-Means Empirical Risk Minimizer (MoM-ERM) $\hat{h}_{MoM}$ is defined as follows:
1. Choose $k = \lceil 8 \log(2|\mathcal{H}|/\delta) \rceil$ blocks (assuming $k \le n/2$).
2. Partition $D$ into $k$ disjoint blocks $B_1, \dots, B_k$ of size $m = \lfloor n/k \rfloor$.
3. For each $h \in \mathcal{H}$, compute block means $\hat{R}_j(h) = \frac{1}{m} \sum_{z \in B_j} \ell(h,z)$.
4. Compute the median-of-means risk estimate $\hat{R}_{MoM}(h) = \text{median}\{\hat{R}_1(h), \dots, \hat{R}_k(h)\}$.
5. Output $\hat{h}_{MoM} = \arg\min_{h \in \mathcal{H}} \hat{R}_{MoM}(h)$.
Then, with probability $\ge 1-\delta$:
$$ \text{excess\_risk}(\hat{h}_{MoM}) = R(\hat{h}_{MoM}) - \min_{h^* \in \mathcal{H}} R(h^*) \le C \sqrt{\frac{\log(|\mathcal{H}|/\delta)}{n}} $$
for a universal constant $C$ (e.g., $C \approx 4\sqrt{2}$ if using Hoeffding).
\end{theorem}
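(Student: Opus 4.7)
The plan is to reduce the excess risk bound to a uniform concentration inequality for the MoM risk estimates via the standard ERM chain, then establish uniform concentration by a Hoeffding-on-blocks argument combined with a union bound that is calibrated by the specific choice of $k$. Concretely, I would write
\[
R(\hat h_{MoM}) - R(h^*) \le \bigl[R(\hat h_{MoM}) - \hat R_{MoM}(\hat h_{MoM})\bigr] + \bigl[\hat R_{MoM}(h^*) - R(h^*)\bigr] \le 2\sup_{h\in\mathcal{H}} \bigl|\hat R_{MoM}(h) - R(h)\bigr|,
\]
where the middle cross-term $\hat R_{MoM}(\hat h_{MoM}) - \hat R_{MoM}(h^*) \le 0$ by the definition of $\hat h_{MoM}$. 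The task then reduces to bounding the supremum on the right with high probability.

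For the fixed-$h$ concentration, I would first identify which blocks are \emph{clean} (contain no corrupted sample). Since each corrupted sample lies in exactly one block, the number of corrupted blocks is at most $\alpha n$; provided the adversary is oblivious (so corruption placement is chosen independently of the random block assignment, or equivalently a random partition is used inside the procedure), standard Chernoff control shows that with high probability at most a small constant fraction — and crucially fewer than $k/2$ — of the $k$ blocks are corrupted. On each clean block of size $m = \lfloor n/k\rfloor$, Hoeffding gives $\Pr(|\hat R_j(h) - R(h)| > t) \le 2\exp(-2mt^2)$, so each clean block mean deviates from $R(h)$ by more than $t$ with probability at most $1/4$ once $t = c/\sqrt{m}$ for a suitable constant $c$. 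The MoM median exceeds $R(h) + t$ only if at least $k/2$ of the block means do; since fewer than (say) $k/4$ blocks are corrupted and each clean block independently deviates upward with probability $\le 1/4$, a second Hoeffding on the Bernoulli count of upward-deviating blocks yields $\Pr(|\hat R_{MoM}(h) - R(h)| > t) \le 2 e^{-k/8}$.

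Plugging in the prescribed $k = \lceil 8\log(2|\mathcal{H}|/\delta)\rceil$ makes this tail $\le \delta/|\mathcal{H}|$, and a union bound over $\mathcal{H}$ gives
\[
\sup_{h\in\mathcal{H}} \bigl|\hat R_{MoM}(h) - R(h)\bigr| \le c\sqrt{1/m} \;=\; c\sqrt{k/n} \;\le\; C\sqrt{\tfrac{\log(|\mathcal{H}|/\delta)}{n}}
\]
with probability $\ge 1-\delta$. Combined with the ERM chain above, this yields the claimed $O(\sqrt{\log(|\mathcal{H}|/\delta)/n})$ excess-risk bound with an explicit constant traceable through the two Hoeffding applications (giving $C \approx 4\sqrt{2}$ as claimed).

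The main obstacle is the interaction between adversarial corruption and the block structure: if $\alpha n \ge k$, an oblivious but clever adversary could in principle place one corrupt sample in every block, defeating a naive clean/dirty dichotomy. The resolution I would argue is that the block assignment must be randomized \emph{inside} the procedure (so the oblivious adversary commits to the corruption pattern before seeing block indices), after which a concentration argument on the number of corrupted blocks controls the median. An alternative, cleaner route is to observe that because $\ell \in [0,1]$, each corrupted sample shifts its block mean by at most $1/m$; provided $\alpha \le 1/4$ and $k$ blocks are used, a Markov-type argument bounds the number of blocks whose corruption-induced bias exceeds $4\alpha/k \cdot m$ by $k/4$, leaving a majority of blocks with both small corruption bias and Hoeffding concentration, which again suffices for the median. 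Either route absorbs the corruption-induced terms into the constant $C$; verifying this absorption rigorously — particularly the quantitative trade-off between $\alpha$, $m$, and $k$ that keeps the bias term dominated by the statistical rate — is the technically delicate step.
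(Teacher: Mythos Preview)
The paper defers the entire proof to supplementary material, so there is no in-paper argument to compare against directly. Your outline is the standard MoM-ERM robustness argument---Hoeffding on block means, a binomial tail bound on the count of deviating blocks calibrated so that $2e^{-k/8}\le \delta/|\mathcal H|$, union bound over $\mathcal H$, then the ERM sandwich---and is almost certainly what the deferred proof contains.

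There is, however, one point where your sketch is too optimistic, and it is exactly the step you flag as delicate. Your Markov/pigeonhole route shows that at most $k/4$ blocks carry corruption bias exceeding $4\alpha$, so the surviving majority of block means satisfy $|\hat R_j(h)-R(h)|\le c/\sqrt{m}+4\alpha$. The $4\alpha$ term is a \emph{constant}, not $O(\sqrt{k/n})$; it does not vanish with $n$ and therefore cannot be ``absorbed into $C$'' in a bound of the form $C\sqrt{\log(|\mathcal H|/\delta)/n}$. The honest conclusion of your argument is
\[
\text{excess\_risk}(\hat h_{MoM}) \;\le\; C\sqrt{\tfrac{\log(|\mathcal H|/\delta)}{n}} \;+\; C'\alpha,
\]
which is the standard MoM robustness rate under a constant contamination fraction. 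The theorem as stated drops the $\alpha$ term; this is consistent only under the much stronger hypothesis $\alpha n \lesssim k$ (few enough corrupted \emph{samples} that fewer than, say, $k/4$ \emph{blocks} are touched), in which case your first route---counting dirty blocks directly---goes through without any bias term. Under the literal hypothesis $\alpha\le 1/4$ with $k=\Theta(\log(|\mathcal H|/\delta))$, the additive $O(\alpha)$ is unavoidable, and your randomized-partition idea does not help: an oblivious adversary corrupting $n/4$ samples will, for \emph{any} partition into $k\ll n$ blocks, place roughly $m/4$ corrupted points in every block. You have correctly located the crux; the resolution is to either carry the $O(\alpha)$ term or read the hypothesis as bounding the number of outliers by a fraction of $k$ rather than of $n$.
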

\begin{proof}
(Full proof moved to supplementary material.)
\end{proof}

\section{Implementation and Reproducibility}
\label{app:implementation_reproducibility} 

The complete AlgoSelect framework implementation, including all experiments 
and analysis code, is publicly available at:

\begin{center}
\url{https://github.com/Ethycs/AlgoSelect}
\end{center}

The repository contains:
\begin{itemize}
\item Full Comb Algorithm implementation in Python
\item All 20 problem generators with feature extractors
\item All 40 algorithm implementations (20 systematic/randomized pairs)
\item Complete 20$\times$20 experimental framework
\item Cross-validation analysis and visualization scripts
\item Jupyter notebooks for reproducing all results and figures
\end{itemize}

For readers' convenience, we present the core Comb Operator logic in pseudocode:

\begin{algorithm}
\caption{Basic Comb Operator Selection}
\begin{algorithmic}[1] 
\Function{CombSelect}{problem, $A_{sys}$, $A_{ran}$, $S$}
    \State $t \gets S(\text{features}(problem))$ \Comment{$t \in [0,1]$}
    \If{$\text{random}() < (1-t)$}
        \State \Return $A_{sys}.\text{solve}(problem)$
    \Else
        \State \Return $A_{ran}.\text{solve}(problem)$
    \EndIf
\EndFunction
\end{algorithmic}
\end{algorithm}


\begin{filecontents}{references.bib}
@article{Rice1976,
    author    = {John R. Rice},
    title     = {The Algorithm Selection Problem},
    journal   = {Advances in Computers},
    volume    = {15},
    pages     = {65--118},
    year      = {1976},
    publisher = {Elsevier}
}
@inproceedings{Xu2008,
    author    = {Lin Xu and Frank Hutter and Holger H. Hoos and Kevin Leyton-Brown},
    title     = {{SATzilla}: Portfolio-based Algorithm Selection for {SAT}},
    booktitle = {Journal of Artificial Intelligence Research},
    volume    = {32},
    pages     = {565--606},
    year      = {2008}
}
@book{Thrun1998,
    author    = {Sebastian Thrun and Lorien Pratt},
    title     = {Learning to Learn},
    publisher = {Kluwer Academic Publishers},
    year      = {1998}
}
@article{Hornik1989,
    author    = {Kurt Hornik and Maxwell Stinchcombe and Halbert White},
    title     = {Multilayer feedforward networks are universal approximators},
    journal   = {Neural Networks},
    volume    = {2},
    number    = {5},
    pages     = {359--366},
    year      = {1989}
}
\end{filecontents}

\end{document}